\documentclass[runningheads]{llncs}
\usepackage[T1]{fontenc}
\usepackage{graphics} 
\usepackage{epsfig} 
\usepackage{times} 
\usepackage{amsmath} 
\usepackage{amssymb}  
\usepackage{adjustbox}
\usepackage{marvosym}

\usepackage{subfigure}
\usepackage{multirow}
\usepackage{booktabs}
\usepackage{algorithm}
\usepackage{algorithmic}

\begin{document}
\title{Robust Regularized Policy Iteration under Transition Uncertainty}
%
%
\author{Anonymous submission}
\author{Hongqiang Lin\inst{1} \and Zhenghui Fu\inst{2} \and Weihao Tang\inst{1} \and Pengfei Wang\inst{3} \and Yiding Sun\inst{4} \and Qixian Huang\inst{5}$\textsuperscript{(\Letter)}$ \and Dongxu Zhang\inst{4}$\textsuperscript{(\Letter)}$}

\authorrunning{H.Lin et al.}
%
%
\institute{College of Computer Science and Technology, Zhejiang University, Hangzhou, China \and School of Artificial Intelligence, Optics and Electronics (iOPEN), Northwestern Polytechnical University, Xi'an, China \and School of Software Technology, Zhejiang University, Hangzhou, China \and School of Software Engineering, Xi'an Jiaotong University, Xi'an, China \and School of Systems Science and Engineering, Sun Yat-sen University, Guangzhou, China}
%
\maketitle              
\begin{abstract}
Offline reinforcement learning (RL) enables data-efficient and safe policy learning without online exploration, but its performance often degrades under distribution shift. The learned policy may visit out-of-distribution state-action pairs where value estimates and learned dynamics are unreliable. To address policy-induced extrapolation and transition uncertainty in a unified framework, we formulate offline RL as robust policy optimization, treating the transition kernel as a decision variable within an uncertainty set and optimizing the policy against the worst-case dynamics. We propose Robust Regularized Policy Iteration (RRPI), which replaces the intractable max-min bilevel objective with a tractable KL-regularized surrogate and derives an efficient policy iteration procedure based on a robust regularized Bellman operator. We provide theoretical guarantees by showing that the proposed operator is a $\gamma$-contraction and that iteratively updating the surrogate yields monotonic improvement of the original robust objective with convergence. Experiments on D4RL benchmarks demonstrate that RRPI achieves strong average performance, outperforming recent baselines including percentile-based methods on the majority of environments while remaining competitive on the rest. Moreover, RRPI exhibits robust performance by aligning lower $Q$-values with high epistemic uncertainty, which prevents the policy from executing unreliable out-of-distribution actions.
\keywords{Offline reinforcement learning  \and policy optimization \and epistemic uncertainty}
\end{abstract}
\section{Introduction}
Reinforcement learning (RL) has become a key ingredient of modern decision-making systems, driving breakthroughs in complex domains such as strategic game playing, embodied intelligence, and large language model reasoning by optimizing long-horizon policies from interaction data \cite{10.5555/3312046,liu2025optimized,zhang2026queriesneeddeepthought}. Despite these successes, standard online RL typically relies on trial-and-error exploration, a process that can be prohibitively costly in high-stakes real-world applications \cite{gottesman2019guidelines,kumar2022pre}. This has driven growing interest in offline RL \cite{levine2020offline}, which aims to learn high-performing policies solely from pre-collected datasets without additional environment interaction.

The central difficulty of offline RL is distribution shift: the learned policy may query state-action pairs that lie outside the support of the dataset. In such out-of-distribution (OOD) regions, value estimates can suffer from severe extrapolation errors, closely tied to epistemic uncertainty induced by limited coverage \cite{lurevisiting}. To mitigate these failures, existing methods either adopt conservative value learning that explicitly down-weights OOD actions \cite{kumar2020conservative,yu2021combo,huang2024efficient,NEURIPS2024_cdc1d08e} or leverage uncertainty estimates to steer the policy away from high-uncertainty regions \cite{yu2020mopo,Qiao_Lyu_Jiao_Liu_Li_2025}. However, recent evidence suggests that such mechanisms may be overly conservative and can sacrifice performance even in well-supported regions \cite{park2024value}. More fundamentally, these approaches typically plan under a single learned dynamics model and thus do not directly capture uncertainty in the transition dynamics itself.

In this work, we pursue a robust policy optimization perspective. Instead of treating the learned dynamics as a point estimate, we regard the transition model as a decision variable that lies in an uncertainty set of plausible kernels, and we seek a policy that maximizes performance under the worst-case dynamics in this set \cite{dong2024online}. This formulation incorporates dynamics uncertainty directly into the objective and yields a principled alternative to heuristic uncertainty penalties. Nevertheless, the induced max-min objective leads to a challenging bilevel optimization problem, whose exact solution is often computationally prohibitive in practical offline RL pipelines. To address this, we propose \emph{Robust Regularized Policy Iteration} (RRPI), which introduces a KL-regularized surrogate objective and an associated robust regularized Bellman operator, enabling an efficient iterative algorithm with theoretical guarantees.
\begin{itemize}
	
	\item Building on the robust formulation, RRPI naturally accounts for transition uncertainty and provides worst-case guarantees. By introducing a regularizer, RRPI derives a surrogate objective that can be solved iteratively, thereby avoiding the prohibitive cost of directly solving the original max--min bilevel problem.
	\item We provide theoretical results showing that optimizing the surrogate objective improves the original robust objective, and that the RRPI iterates converge under mild conditions.
	\item Empirically, RRPI surpasses recent state-of-the-art baselines on D4RL benchmarks and exhibits strong robustness. In particular, the learned policy attains lower returns in regions with higher epistemic uncertainty, indicating that RRPI avoids unreliable out-of-distribution behaviors under transition uncertainty.
\end{itemize}

This work belongs to the family of model-based offline RL methods, which first learn a transition model from the offline dataset and then perform policy optimization by planning or dynamic programming under the learned dynamics. In contrast to approaches that plan under a single estimated model, we adopt a robust perspective and optimize against an uncertainty set over transition kernels.

\section{Related work and background}

\subsection{Related work}
Offline RL learns a policy solely from a fixed dataset, and its core challenge arises from distributional shift: the learned policy may query state-action pairs that are poorly covered by the data, making value estimates unreliable \cite{levine2020offline,prudencio2023survey} and leading to extrapolation error \cite{park2025model}.  Uncertainty in offline RL is often characterized as epistemic uncertainty induced by limited data support, which differs from aleatoric uncertainty originating from stochastic environment dynamics and observation noise \cite{guo2022model,ghosh22a,lin2026offlinepolicyoptimizationposterior,lin2026regularizedofflinepolicyoptimization}. To mitigate this epistemic uncertainty, existing approaches primarily adopt conservative value learning that penalizes OOD actions \cite{kumar2020conservative,yu2021combo,huang2024efficient,NEURIPS2024_cdc1d08e}, regularized policy updates that constrain deviations from the behavior policy \cite{Wu2019BehaviorRO,adaptive2023zhang}, or explicit uncertainty estimation via ensembles or Bayesian methods \cite{rigter2022rambo,guo2022model,Qiao_Lyu_Jiao_Liu_Li_2025}.
In contrast to prior methods, we cast offline RL as a robust optimization problem over an uncertainty set, which incorporates dynamics uncertainty directly without requiring explicit uncertainty estimation. In this formulation, the dynamics model is treated as a variable rather than a fixed quantity. This method avoids heuristic penalties on OOD action values and does not require constraining the learned policy to remain close to the behavior policy. Building on this, we develop an iterative policy optimization algorithm with theoretical guarantees of convergence to an optimal solution.

\subsection{Background}
\subsubsection{Markov Decision Process (MDP).}
A Markov decision process (MDP) is defined by the tuple ${M}=( S,  A, p, r, \rho_0, \gamma)$, where $S$ and $A$ denote the state and action spaces, respectively. 
The transition kernel $p(\cdot\mid s,a)\in \Delta(S)$ specifies the distribution over next states given $(s,a)$, where $\Delta(\cdot)$ denotes the probability simplex. 
The reward function is $r:S \times A \to \mathbb{R}$, $\rho_0$ is the initial-state distribution, and $\gamma\in(0,1)$ is the discount factor. 
The goal is to learn a policy $\pi(\cdot\mid s)\in \Delta(A)$ that maximizes the expected discounted return:
\begin{equation}
	\label{eq:obj}
	\eta(\pi,p)=\mathop{\mathbb E}\limits_{\rho_0,\pi,p}\bigg[\sum_{t=0}^{\infty}\gamma^tr(s_t,a_t)\bigg].
\end{equation}
\subsubsection{Model-Based Offline RL.}
In offline RL, we perform policy optimization using only a fixed dataset $D$ collected by an unknown behavior policy. A nominal dynamics model $\hat p$ is typically learned by maximizing the log-likelihood:
\begin{equation}
	\mathop{\mathbb E}\limits_{(s_t,a_t,r_{t+1},s_{t+1})\sim D}\big[\log \hat{p}(s_{t+1},r_{t+1}\mid s_t,a_t)\big].
\end{equation}

Subsequently, an offline RL algorithm can be applied to a composite dataset $D \cup D_{\text{model}}$, where $D_{\text{model}}$ contains synthetic transitions generated by rolling out $\hat{p}$.

\subsubsection{Robust MDP.}
Robust MDPs model transition uncertainty by treating the dynamics $p$ as a decision variable rather than a fixed estimate. Given an uncertainty set $P$ that contains plausible transition kernels, robust MDP seeks a policy that maximizes performance under the worst-case dynamics in $P$ \cite{dong2024online}. Concretely, the robust optimal policy is obtained by solving the following max-min problem:
\begin{equation}
	\label{eq:robust_mdp}
	\pi^*=\arg\max_{\pi\in\Pi}\min_{p\in P}\eta(\pi,p).
\end{equation}
We denote the corresponding robust return by $J(\pi)=\min_{p\in P}\eta(\pi,p)$. This objective provides a principled way to hedge against model misspecification. In particular, when a policy visits state--action pairs with limited data coverage, the learned dynamics model can extrapolate and lead to compounding errors over rollouts. Optimizing $\eta(\pi,\hat p)$ for a point estimate $\hat p$ can therefore be brittle under distribution shift. In contrast, the robust objective explicitly accounts for adverse, yet plausible, transitions, yielding policies that are less sensitive to transition errors.

The modeling power and conservatism of robust MDPs depend critically on how $P$ is constructed. A common assumption is \emph{rectangularity}, where $P$ factorizes across states or state-action pairs, enabling the inner minimization to decompose locally and making robust dynamic programming tractable. While convenient, rectangular sets can be overly conservative in high-dimensional problems since they allow independent worst-case perturbations at each $(s,a)$. In practice, $P$ is often built from function approximation, for example by using model ensembles, which capture epistemic uncertainty induced by limited coverage.

Unlike standard offline RL methods that optimize the return under a single learned dynamics (implicitly assuming the model is accurate), the robust formulation explicitly optimizes against transition uncertainty. As a result, it naturally down-weights actions whose long-term consequences are highly sensitive to transition errors, which is especially desirable in offline settings. Figure~\ref{fig2:robust} illustrates the robust optimization viewpoint.
\begin{figure}[t!]
	\centering
	\includegraphics[width=0.49\textwidth]{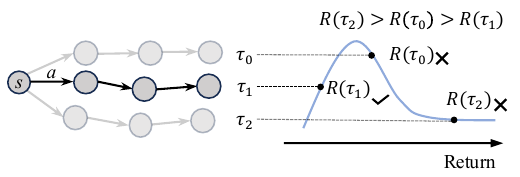}
	\caption{Illustration of robust optimization. We roll out different dynamics models from the uncertainty set $P$ to obtain three trajectories $\tau_0$, $\tau_1$, and $\tau_2$, whose returns satisfy $R(\tau_2)>R(\tau_0)>R(\tau_1)$. Accordingly, we optimize with respect to the worst-case trajectory $\tau_1$ (i.e., $R(\tau_1)$).}
	\label{fig2:robust}
\end{figure}
\section{Methods}
In this section, we first present our policy optimization algorithm and then describe its implementation details.
\subsection{Robust regularized policy iteration}
Directly optimizing Eq.~\eqref{eq:robust_mdp} is computationally inefficient, as it requires accurately evaluating the inner subproblem before computing a policy gradient to guide the outer-loop update. 
To improve efficiency, we draw inspiration from TRPO \cite{pmlr-v37-schulman15} and construct a surrogate objective $\widehat{\eta}(\pi,p,\mu)$. 
The surrogate is designed to satisfy two key requirements: (i) $\widehat{\eta}(\pi,p,\mu)$ is tractable to optimize with standard policy optimization method, and (ii) improving the surrogate objective yields a monotonic improvement in the original objective ${\eta}(\pi,p)$. We define $\widehat{\eta}(\pi,p,\mu)$ as
\begin{equation}
	\label{eq:reg_obj}
	\widehat{\eta}(\pi,p,\mu)=\mathop{\mathbb E}\limits_{\rho_0,\pi,p}\bigg[\sum_{t=0}^{\infty}\gamma^t(r(s_t,a_t)-\alpha\log\frac{\pi}{\mu})\bigg],
\end{equation}
where $\alpha>0$ is the regularization coefficient and $\mu$ is the reference policy.

Replacing the objective in Eq.~\eqref{eq:robust_mdp} with $\widehat{\eta}(\pi,p,\mu)$ yields a surrogate max-min problem. To solve it, we introduce the following robust regularized Bellman operator:
\begin{equation}
	\label{eq:operator}
	\mathcal{T}Q(s,a)=r(s,a)+\gamma V(s'),
\end{equation}
where
\begin{equation}
	V(s')=\min_{p\in P}\mathbb{E}_p\left[\alpha\log \mathbb{E}_\mu\exp\!\left(\frac{1}{\alpha}Q(s',a')\right)\right].	
\end{equation}

We next analyze theoretical properties of the operator $\mathcal{T}$.
\begin{theorem}
	\label{thm:rrpo}
	The robust regularized Bellman operator $\mathcal{T}$ is a $\gamma$-contraction mapping under the $\|\cdot\|_{\infty}$ norm. Starting from any initial action-value function $Q:S\times A\to\mathbb{R}$ and repeatedly applying $\mathcal{T}$, the resulting sequence converges to a fixed point $Q^*$. Moreover, the optimal robust policy for the surrogate objective $\widehat{\eta}(\pi,p,\mu)$ is given by
	\begin{equation}
		\label{eq:opt_policy}
		\pi^*(a|s)\propto \mu(a|s)\exp\!\left(\frac{1}{\alpha}Q^*(s,a)\right).
	\end{equation}
\end{theorem}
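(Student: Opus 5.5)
The plan has three parts: show that $\mathcal{T}$ is a $\gamma$-contraction, get convergence from Banach's fixed point theorem, and identify the optimal policy through the Gibbs variational principle for the KL-regularized inner maximization.

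\textbf{Contraction.} Fix $Q_1,Q_2$ and write $\delta=\|Q_1-Q_2\|_\infty$. For each state $s'$, define the soft-value (log-sum-exp) map
\[
L_Q(s')=\alpha\log\mathbb{E}_{a'\sim\mu(\cdot|s')}\exp\!\left(\tfrac{1}{\alpha}Q(s',a')\right).
\]
This map is monotone in $Q$ and translation-equivariant: $L_{Q+c}=L_Q+c$ for any constant $c$. Since $Q_1\le Q_2+\delta$ pointwise, we get $|L_{Q_1}(s')-L_{Q_2}(s')|\le\delta$ for every $s'$.

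Next, for any kernel $p$,
\[
\bigl|\mathbb{E}_p L_{Q_1}-\mathbb{E}_p L_{Q_2}\bigr|\le\delta.
\]
Then apply the elementary fact $|\min_{p\in P} f(p)-\min_{p\in P} g(p)|\le\sup_{p\in P}|f(p)-g(p)|$ with $f(p)=\mathbb{E}_p L_{Q_1}$ and $g(p)=\mathbb{E}_p L_{Q_2}$. For the minimum to be attained I would assume $P$ is compact, or else replace $\min$ by $\inf$. This yields $|V_1(s,a)-V_2(s,a)|\le\delta$.

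Here I read the minimum over $P$ as taken per $(s,a)$, i.e. $V$ depends on $(s,a)$ through $p(\cdot|s,a)$, which is the $(s,a)$-rectangular reading. Multiplying by $\gamma$ gives $\|\mathcal{T}Q_1-\mathcal{T}Q_2\|_\infty\le\gamma\delta$. The space of bounded functions on $S\times A$ is complete, so assuming bounded $r$ (which guarantees $\mathcal{T}$ maps this space to itself), Banach's theorem gives a unique fixed point $Q^*$ and convergence $\mathcal{T}^kQ\to Q^*$ from any bounded $Q$.

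\textbf{Optimal policy.} The key tool is the Gibbs variational identity
\[
\max_{\pi(\cdot|s)\in\Delta(A)}\ \mathbb{E}_{a\sim\pi}\bigl[Q(s,a)\bigr]-\alpha\,\mathrm{KL}\bigl(\pi(\cdot|s)\,\|\,\mu(\cdot|s)\bigr)=L_Q(s),
\]
with maximizer $\pi(a|s)\propto\mu(a|s)\exp(Q(s,a)/\alpha)$. I would prove it by writing the objective as $L_Q(s)-\alpha\,\mathrm{KL}(\pi\,\|\,\pi_Q)$, where $\pi_Q$ is the Gibbs policy, and noting that KL is nonnegative.

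Using this identity, I would show $Q^*$ is the robust regularized value of the surrogate max-min problem. For a fixed policy $\pi$, define the policy-evaluation operator
\[
\mathcal{T}^\pi Q(s,a)=r(s,a)+\gamma\min_{p\in P}\mathbb{E}_p\Bigl[\mathbb{E}_\pi\bigl[Q(s',a')\bigr]-\alpha\,\mathrm{KL}\bigl(\pi(\cdot|s')\,\|\,\mu(\cdot|s')\bigr)\Bigr].
\]
The same argument as above shows it is a $\gamma$-contraction, and its fixed point $Q^\pi$ is the worst-case regularized return of $\pi$. By the Gibbs identity, $\mathcal{T}^\pi Q\le\mathcal{T}Q$ for all $Q$, with equality when $\pi=\pi_Q$. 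Monotonicity of $\mathcal{T}$ then gives $Q^\pi\le Q^*$ for every $\pi$. For $\pi^*=\pi_{Q^*}$ we have $\mathcal{T}^{\pi^*}Q^*=\mathcal{T}Q^*=Q^*$, so $Q^{\pi^*}=Q^*$, and $\pi^*$ therefore attains the max-min value.

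\textbf{Main obstacle.} The hard part is justifying that the fixed point of $\mathcal{T}^\pi$ equals $\min_{p}\widehat{\eta}$ for the trajectory-level objective, i.e. interchanging the min over kernels with the Bellman recursion. This requires rectangularity of $P$, so that the adversary's per-step choices compose into a single kernel in $P$.

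I would state this assumption explicitly, consistent with the paper's discussion of rectangularity. Then I would show that the greedy minimizing kernel at each $(s,a)$ is a member of $P$ achieving $Q^\pi$, while any $p\in P$ yields a value at least $Q^\pi$ by monotone iteration. Finally, I would take the expectation over $\rho_0$ and $\pi$ at the first step to pass from $Q$-values to $\widehat{\eta}$.
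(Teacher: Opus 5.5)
Your proposal is correct and follows essentially the same route as the paper's proof. Both prove contraction via non-expansiveness of the log-sum-exp map together with the bound $|\min f-\min g|\le\sup|f-g|$, and both then use the KL/Gibbs duality, the evaluation operator $\mathcal{T}^\pi$, the comparison $\mathcal{T}^\pi Q\le\mathcal{T}Q$ with monotonicity to get $Q^{\pi}_{P}\le Q^*$, and the identity $\mathcal{T}^{\pi^*}Q^*=Q^*$. Your explicit $(s,a)$-rectangularity assumption, together with compactness of $P$ and bounded rewards, justifies a step the paper leaves unargued: its final identification of $\min_{p\in P}\widehat{\eta}(\pi,p,\mu)$ with $\mathbb{E}_{\rho_0,\pi}\bigl[Q^{\pi}_{P}(s_0,a_0)-\alpha D_{\mathrm{KL}}(\pi\|\mu)\bigr]$, so your version is, if anything, more careful.
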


\begin{remark}
	The operator in Eq.~\eqref{eq:operator} is closely related to KL-regularized dynamic programming. In particular, for any next-state distribution induced by  dynamics model $p$, the term $\alpha\log \mathbb{E}_{a'\sim \mu(\cdot|s')}\left[\exp\left(Q(s',a')/\alpha\right)\right]$ is the convex conjugate of the KL divergence and corresponds to eliminating the inner maximization over policies in
	
	$$\max_{\pi} \left\{\mathop{\mathbb{E}}_{a'\sim \pi(\cdot|s')}\!\left[Q(s',a')\right]-\alpha D_\mathrm{KL}\left(\pi(\cdot|s')\,\|\,\mu(\cdot|s')\right)\right\}.$$
	
	Consequently, the induced optimal policy in Eq.~\eqref{eq:opt_policy} takes a Boltzmann form that is absolutely continuous with respect to the reference policy $\mu$.
	Moreover, the contraction factor remains $\gamma$ because the $\log$-$\exp$ mapping is non-expansive under $\|\cdot\|_\infty$ when $\alpha>0$, and the minimization over $p\in P$ preserves monotonicity.
	The coefficient $\alpha$ controls the strength of regularization. Smaller $\alpha$ makes the update more greedy with respect to $Q^*$, while larger $\alpha$ yields a more conservative update that stays closer to $\mu$. This perspective helps explain why RRPI can be stabilized by keeping successive policies close in KL, while still allowing the algorithm to gradually move toward the soft-greedy solution implied by the robust regularized Bellman operator.
\end{remark}

Although the operator $\mathcal{T}$ enables efficient optimization of the surrogate objective $\widehat{\eta}(\pi,p,\mu)$, the surrogate generally differs from the original return $\eta(\pi,p)$. The following theorem characterizes how iteratively updating the reference policy along the optimization trajectory links improvements in the surrogate problem to monotonic progress on the original robust objective.
\begin{theorem}
	\label{thm:irpo}
	Starting from any stochastic initial policy $\pi_0:S\to\Delta(A)$ with full support (i.e., $\pi_0(a|s)>0$ for all $(s,a)\in S\times A$), we generate a sequence $\{\pi_i\}_{i\ge 0}$ by repeatedly solving the robust regularized subproblem with the reference policy set to the previous iterate:
	\begin{equation}
		\pi_{i+1}\in\arg\max_{\pi\in\Pi}\min_{p\in P}\widehat{\eta}(\pi,p,\mu=\pi_i).
	\end{equation}
	Then the sequence monotonically improves the original objective $J(\pi)\:=min_{p\in P}\eta(\pi,p)$ in the sense that $J(\pi_{i+1})\ge J(\pi_i)$ for all $i\ge 0$.
	Furthermore, the sequence converges to an optimal robust policy for the original (non-regularized) problem: for any state $s$ and actions $a,a'\in A$ satisfying
	$\lim_{i\to\infty}Q_{P}^{\pi_i}(s,a)>\lim_{i\to\infty}Q_{P}^{\pi_i}(s,a')$,
	we have $\pi_i(a\mid s)/\pi_i(a'\mid s)\to\infty$ as $i\to\infty$, where $Q_{P}^{\pi_i}$ denotes the action-value function under the worst-case dynamics in $P$.
\end{theorem}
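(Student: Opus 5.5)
The plan follows the mirror-descent / soft policy iteration template (as in TRPO and KL-regularized policy iteration), adapted to the worst case over $P$. First, fix an iterate $\pi_i$ and, by Theorem~\ref{thm:rrpo} with $\mu=\pi_i$, let $Q_i^*$ be the fixed point of $\mathcal{T}$. Then $\pi_{i+1}(a|s)\propto \pi_i(a|s)\exp(Q_i^*(s,a)/\alpha)$. By induction, every $\pi_i$ keeps full support, since $\pi_0$ does and the exponential factor is positive.

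\textbf{Monotone improvement.} Compare the surrogate values of $\pi_{i+1}$ and $\pi_i$, both with reference $\pi_i$.
\begin{itemize}
\item Because the KL term vanishes when $\pi=\mu$, we have $\min_p\widehat{\eta}(\pi_i,p,\pi_i)=J(\pi_i)$.
\item Optimality of $\pi_{i+1}$ gives $\min_p\widehat{\eta}(\pi_{i+1},p,\pi_i)\ge J(\pi_i)$.
\item The KL penalty is nonnegative, so $\widehat{\eta}(\pi_{i+1},p,\pi_i)\le \eta(\pi_{i+1},p)$ for every $p$. Taking $\min_p$ on both sides gives $J(\pi_{i+1})\ge \min_p\widehat{\eta}(\pi_{i+1},p,\pi_i)\ge J(\pi_i)$.
\end{itemize}
This argument needs no rectangularity. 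To get the pointwise statement $Q_P^{\pi_{i+1}}\ge Q_P^{\pi_i}$ used below, I would argue state-wise instead, in the style of the soft policy improvement lemma. At each $s$, $\pi_{i+1}(\cdot|s)$ maximizes $\mathbb{E}_\pi Q_i^*-\alpha\,\mathrm{KL}(\pi\|\pi_i)$. Unrolling the robust Bellman recursion, using monotonicity of $\min_p$ under the rectangularity that Theorem~\ref{thm:rrpo} implicitly assumes, yields $Q_P^{\pi_{i+1}}\ge Q_i^*\ge Q_P^{\pi_i}$. The boundedness of rewards ($|Q|\le R_{\max}/(1-\gamma)$) then gives, by monotone convergence, a pointwise limit $Q_\infty(s,a)=\lim_i Q_P^{\pi_i}(s,a)$.

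\textbf{Ratio divergence.} Unroll the update:
$$\log\frac{\pi_i(a|s)}{\pi_i(a'|s)}=\log\frac{\pi_0(a|s)}{\pi_0(a'|s)}+\frac1\alpha\sum_{k<i}\bigl(Q_k^*(s,a)-Q_k^*(s,a')\bigr).$$
It then suffices to show $Q_k^*\to Q_\infty$ pointwise. Given that, when $Q_\infty(s,a)-Q_\infty(s,a')=\delta>0$ the summands are eventually at least $\delta/2$, so the sum diverges and the ratio tends to infinity. To prove $Q_k^*\to Q_\infty$, use the sandwich $Q_P^{\pi_k}\le Q_k^*\le Q_P^{\pi_{k+1}}$. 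The upper bound comes from the improvement step above. For the lower bound, note that the soft value $\alpha\log\mathbb{E}_{\pi_k}\exp(Q/\alpha)\ge \mathbb{E}_{\pi_k}Q$ by Jensen, and monotonicity of the robust operator propagates this to $Q_k^*\ge Q_P^{\pi_k}$. Both bounds converge to $Q_\infty$.

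\textbf{Main obstacle.} The hard step is making these sandwich inequalities rigorous, in particular the upper bound $Q_k^*\le Q_P^{\pi_{k+1}}$. Here the worst-case kernel for the regularized problem differs from the one for $\pi_{k+1}$'s unregularized return, and the entropy-like bonus $-\alpha\log(\pi_{k+1}/\pi_k)$ is not pointwise signed. I would first try to show that its expectation under $\pi_{k+1}$ is a KL term, hence nonnegative, so that it enters with the correct sign. I would then interchange the min over $p$ with the unrolled expectation using $(s,a)$-rectangularity, which I would state explicitly as an assumption. Finally, the claim of convergence to an optimal robust policy I would interpret exactly as the stated ratio property: the limit policy is greedy with respect to $Q_\infty$. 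If needed, I would add a short fixed-point argument showing $Q_\infty$ satisfies the robust Bellman optimality equation.
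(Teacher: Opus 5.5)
Your proof is correct, and its first and last steps coincide with the paper's. The first is the $J$-level improvement via $\widehat{\eta}(\pi_i,p,\pi_i)=\eta(\pi_i,p)$ and nonnegativity of the KL term; the last is the unrolled log-ratio.

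Where you genuinely differ is the middle step: showing that a pointwise limit exists and identifying it. The paper argues only that $J(\pi_i)$ is nondecreasing and bounded, hence convergent, and then asserts that $Q_i^*$ converges pointwise. That inference does not follow from convergence of a scalar sequence. The paper also silently identifies $Q_P^{\pi_i}$ with the regularized fixed point $Q_i^*$, which is really the regularized value of $\pi_{i+1}$ relative to $\pi_i$.

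Your interlacing chain $Q_P^{\pi_k}\le Q_k^*\le Q_P^{\pi_{k+1}}$ fixes both problems. It yields pointwise monotonicity, hence existence of $\lim_i Q_P^{\pi_i}$ by boundedness. It also shows that the $Q_k^*$ driving the multiplicative update have the same limit, so the hypothesis stated in terms of $Q_P^{\pi_i}$ really feeds the recursion.

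The cost is an explicit $(s,a)$-rectangularity assumption, so that $Q_P^{\pi}$ is the fixed point of $\mathcal{T}^{\pi}_{u}Q=r+\gamma\min_{p}\mathbb{E}_{p}\mathbb{E}_{\pi}Q$. The paper's operator $\mathcal{T}$ already presupposes this, so nothing is lost.

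The step you flag as the main obstacle is short. The duality identity in Theorem~\ref{thm:rrpo} gives $\alpha\log\mathbb{E}_{\pi_k}e^{Q_k^*/\alpha}=\mathbb{E}_{\pi_{k+1}}Q_k^*-\alpha D_{\mathrm{KL}}(\pi_{k+1}\|\pi_k)\le\mathbb{E}_{\pi_{k+1}}Q_k^*$. Hence $Q_k^*\le\mathcal{T}^{\pi_{k+1}}_{u}Q_k^*$, and iterating this monotone $\gamma$-contraction gives $Q_k^*\le Q_P^{\pi_{k+1}}$. This mirrors your Jensen argument for the lower bound. It needs neither a comparison of worst-case kernels nor an interchange of $\min_p$ with the unrolled expectation.
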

\begin{remark}
	Theorem~\ref{thm:irpo} provides a formal link between optimizing the regularized surrogate $\widehat{\eta}(\pi,p,\mu)$ and improving the original objective $J(\pi)=\min_{p\in P}\eta(\pi,p)$. A key ingredient is the reference-policy update $\mu=\pi_i$, which makes each subproblem a local improvement step around the current iterate. This improvement step mirrors the role of trust regions while accounting for worst-case transition uncertainty through the inner minimization over $p\in P$. The requirement that $\pi_0$ has full support ensures that all subsequent iterates admit a well-defined regularization term $\log\frac{\pi}{\mu}$ (with $\mu=\pi_i$) and prevents irrevocably assigning zero probability to actions. This is important in robust settings, where an action that is suboptimal under nominal dynamics may become optimal under a different adversarial model in $P$. The ratio condition $\pi_i(a|s)/\pi_i(a'|s)\to\infty$ whenever the limiting action-value of $a$ exceeds that of $a'$ indicates that the sequence becomes increasingly greedy with respect to the limiting $Q$-function, thereby recovering an optimal robust policy of the original problem in the limit.
	
\end{remark}
\begin{algorithm}[t!]
	\caption{Robust Regularized Policy Iteration} 
	\label{algo} 
	\begin{algorithmic}[1]
		\REQUIRE Dataset $D$, model ensemble size $N$.
		\STATE \textbf{Initialization:} Randomly initialize $Q$-function $Q_{\theta}(s,a)$ and policy $\pi_{\phi}(a|s)$. Initialize target $Q$-function $Q_{\theta'}(s,a)$ and reference policy $\mu_{\phi'}(a|s)$ with $\theta'\gets \theta$, $\phi'\gets\phi$. Randomly initialize $N$ dynamics models $\{ p_{\nu_i}(s'|s,a) \}_{i=1}^N$, forming the model ensemble.
		\STATE Train each transition dynamics model $p_{\nu_i}(s'|s,a)$ to maxmize: $$\mathop{\mathbb E}\limits_{(s_t,a_t,r_{t+1},s_{t+1})\sim \mathcal{D}}[\log p_{\nu_i}(s_{t+1},r_{t+1}|s_t,a_t)].$$
		\FOR{$i=1,2,\cdots ,G$}
		\STATE Perform policy evaluation using Eq.~\eqref{eq:operator} and update $Q_{\theta}$ to approximate the optimal action-value function.
		\STATE Perform policy improvement using Eq.~\eqref{eq:pi_loss} and update the policy $\pi_{\phi}(a|s)$.
		
		\STATE Update target $Q$-function and reference policy $\mu$.
		
		\ENDFOR
		\STATE \textbf{Return:}  $\pi_{\phi}$.
	\end{algorithmic}
\end{algorithm}

\subsection{Implementations}

Algorithm~\ref{algo} summarizes the proposed procedure. We parameterize the action-value function $Q_{\theta}$ and the policy $\pi_{\phi}$ with neural networks. Constructing a suitable uncertainty set $P$ is nontrivial in practice: it must be rich enough to cover model errors while remaining computationally tractable for the inner minimization. In our implementation, we approximate $P$ using a learned dynamics model ensemble, where each transition model $p_{\nu_i}$ parameterizes a Gaussian distribution over next states and rewards \cite{yu2020mopo,kidambi2020morel,guo2022model}.
Specifically, for each $(s,a)$, the ensemble induces a set of plausible next-state distributions, yielding a tractable approximation of the uncertainty set. We instantiate the inner minimization by selecting, among the ensemble members, the transition model that yields the smallest one-step robust target in the Bellman backup. This worst-case member approximation serves as a computationally lightweight proxy to robust dynamic programming over $P$ and directly penalizes state-action pairs where the models disagree. Intuitively, when the dataset provides poor coverage, different ensemble members extrapolate differently, effectively enlarging the uncertainty set and causing the robust backup to propagate pessimistic value estimates. In contrast, in well-covered regions the ensemble predictions concentrate, and the robust backup reduces to near-nominal evaluation.

For policy evaluation, we update $Q_{\theta}$ by minimizing the Bellman residual 
\begin{equation}
	\begin{aligned}
		L_1(\theta)=&\quad\omega_1\mathop{\mathbb{E}}_{(s,a)\sim D}\bigl|Q_{\theta}(s,a)-\mathcal{T}Q_{\theta}(s,a)\bigr|^2 \\& 
		+\quad\omega_2\mathop{\mathbb{E}}_{(s,a,s')\sim D_\text{model}}\bigl|Q_{\theta}(s,a)-\mathcal{T}Q_{\theta}(s,a)\bigr|^2,
	\end{aligned}
\end{equation}
where $\omega_1$ and $ \omega_2$ denote the sampling weights for transitions drawn from the offline dataset and the model-generated buffer, respectively.
\begin{table*}[ht!]
	\centering
	\caption{Performance comparison on D4RL datasets. Scores are normalized as $(\text{score} - \text{random}) / (\text{expert} - \text{random})$ and presented as mean $\pm$ standard deviation. Our results are averaged across 4 random seeds.}
	\label{tab:d4rl_performance}
	\setlength{\tabcolsep}{2pt} 
	\adjustbox{width=0.99\textwidth}
	{
		\begin{tabular}{l|cccccccc}
			\toprule[1.6pt]
			Dataset & CQL & DMG & EPQ & MOReL & RAMBO & PMDB & AMG & RRPI \\
			\midrule
			HalfCheetah-Random        & 31.3$\pm$3.5 & 28.8$\pm$1.3 & 33.0$\pm$2.4 & 38.9$\pm$1.8 & 39.5$\pm$3.5 & 37.8$\pm$0.2 & \textbf{45.4}$\pm$2.8 & 35.5$\pm$0.5\\
			HalfCheetah-Medium        & 46.9$\pm$0.4 & 54.9$\pm$0.2 & 67.3$\pm$0.5 & 60.7$\pm$4.4 & \textbf{77.9}$\pm$4.0 & 75.6$\pm$1.3 & 72.2$\pm$0.6 & 75.2$\pm$0.7\\
			HalfCheetah-Expert        & 97.3$\pm$1.1 & 95.9$\pm$0.3 & \textbf{107.2}$\pm$0.2 & 8.4$\pm$11.8 & 79.3$\pm$15.1 & 105.7$\pm$1.0 & 89.4$\pm$26.4 & 90.7$\pm$1.1\\
			HalfCheetah-Medium-Expert & 95.0$\pm$1.4 & 91.1$\pm$4.2 & 95.7$\pm$0.3 & 80.4$\pm$11.7 & 95.4$\pm$5.4 & \textbf{108.5}$\pm$0.5 & 103.7$\pm$0.2 & 105.3$\pm$1.7\\
			HalfCheetah-Medium-Replay & 45.3$\pm$0.3 & 51.4$\pm$0.3 & 62.0$\pm$1.6 & 44.5$\pm$5.6 & 68.7$\pm$5.3 & 71.7$\pm$1.1 & 67.6$\pm$3.4 & \textbf{74.4}$\pm$0.7\\
			HalfCheetah-Full-Replay   & 76.9$\pm$0.9 & 79.9$\pm$1.2 & 85.3$\pm$0.7 & 70.1$\pm$5.1 & 87.0$\pm$3.2 & 90.0$\pm$0.8 & 86.3$\pm$1.7 & 83.7$\pm$0.9\\
			\midrule
			Hopper-Random        & 5.3$\pm$0.6  & 20.4$\pm$10.4 & 32.1$\pm$0.3 & \textbf{38.1}$\pm$10.1 & 25.4$\pm$7.5 & 32.7$\pm$0.1 & 32.7$\pm$0.2 & 35.0$\pm$0.6\\
			Hopper-Medium        & 61.9$\pm$6.4 & 100.6$\pm$1.9 & 101.3$\pm$0.2 & 84.0$\pm$17.0 & 87.0$\pm$15.4 & 106.8$\pm$0.2 & 107.4$\pm$0.6 & \textbf{109.4}$\pm$0.6\\
			Hopper-Expert        & 106.5$\pm$9.1 & 111.5$\pm$2.2 & 112.4$\pm$0.5 & 80.4$\pm$34.9 & 50.0$\pm$8.1 & 111.7$\pm$0.3 & 102.3$\pm$11.9 & \textbf{114.8}$\pm$0.9\\
			Hopper-Medium-Expert & 96.9$\pm$15.1 & 110.4$\pm$3.4 & 108.8$\pm$5.2 & 105.6$\pm$8.2 & 88.2$\pm$20.5 & 111.8$\pm$0.6 & \textbf{112.7}$\pm$0.3 & 111.9$\pm$0.3\\
			Hopper-Medium-Replay & 86.3$\pm$7.3 & 101.9$\pm$1.4 & 97.8$\pm$1.0 & 81.8$\pm$17.0 & 99.5$\pm$4.8 & 106.2$\pm$0.6 & 104.4$\pm$0.4 & 
			\textbf{106.6}$\pm$0.3\\
			Hopper-Full-Replay   & 101.9$\pm$0.6 & 106.4$\pm$1.1 & 108.5$\pm$0.6 & 94.4$\pm$20.5 & 105.2$\pm$2.1 & \textbf{109.1}$\pm$0.2 & 108.5$\pm$0.7 & 108.6$\pm$0.2\\
			\midrule
			Walker2d-Random        & 5.4$\pm$1.7  & 4.8$\pm$2.2  & 23.0$\pm$0.7 & 16.0$\pm$7.7 & 0.0$\pm$0.3 & 21.8$\pm$0.1 & 22.2$\pm$0.2 & \textbf{23.7}$\pm$0.6\\
			Walker2d-Medium        & 79.5$\pm$3.2 & 92.4$\pm$2.7 & 87.8$\pm$2.1 & 72.8$\pm$11.9 & 84.9$\pm$2.6 & 94.2$\pm$1.1 & 93.2$\pm$1.1 & \textbf{97.5}$\pm$0.7\\
			Walker2d-Expert        & 109.3$\pm$0.1 & 114.7$\pm$0.4 & 109.8$\pm$1.0 & 62.6$\pm$29.9 & 1.6$\pm$2.3 & \textbf{115.9}$\pm$1.9 & 5.5$\pm$1.3 & 111.2$\pm$0.4\\
			Walker2d-Medium-Expert & 109.1$\pm$0.2 & 114.4$\pm$0.7 & 112.0$\pm$0.6 & 107.5$\pm$5.6 & 56.7$\pm$39.0 & 111.9$\pm$0.2 & 114.9$\pm$0.3 & \textbf{115.7}$\pm$1.4\\
			Walker2d-Medium-Replay & 76.8$\pm$10.0 & 89.7$\pm$5.0 & 85.3$\pm$1.0 & 40.8$\pm$20.4 & 89.2$\pm$6.7 & 79.9$\pm$0.2 & 95.6$\pm$2.1 & \textbf{96.4}$\pm$1.0\\
			Walker2d-Full-Replay   & 94.2$\pm$1.9 & 97.5$\pm$4.6 & \textbf{107.4}$\pm$0.6 & 84.8$\pm$13.1 & 88.3$\pm$4.9 & 95.4$\pm$0.7 & 99.9$\pm$3.6 & \textbf{107.3}$\pm$0.4\\
			\midrule
			\textbf{Average} & 73.7 & 81.5 & 85.4 & 65.1 & 68.0 & 88.2 & 81.3 & \textbf{89.1}\\
			\bottomrule[1.6pt]
		\end{tabular}
	}
\end{table*}

During policy improvement, we update $\pi_{\phi}$ by minimizing a KL divergence between the current policy and an unnormalized Boltzmann target induced by the critic,
\begin{equation}
	\label{eq:pi_loss}
	\begin{aligned}
		L_2(\phi)=\mathop{\mathbb{E}}\limits_{s\sim D\cup D_\text{model}}\left[D_{\mathrm{KL}}\left(\pi^*(\cdot|s),\Bigg\|\,\pi_{\phi}(\cdot|s)\right)\right],
	\end{aligned}
\end{equation}
where $\pi^*(\cdot|s)=\pi_{\phi'}(\cdot|s)\exp\!\left(\frac{1}{\alpha}Q_{\theta}(s,\cdot)\right)/Z_{\phi'}(s)$ denotes the Boltzmann target, $\pi_{\phi'}$ is the reference policy, and $Z_{\phi'}(s)=\int_A \pi_{\phi'}(a|s)\exp\!\left(\frac{1}{\alpha}Q_{\theta}(s,a)\right)\,da$ is the corresponding partition function. This KL objective can be viewed as a practical instantiation of Eq.~\eqref{eq:opt_policy}: it drives $\pi_{\phi}$ toward the soft-greedy policy induced by $Q_{\theta}$ while keeping the update close to $\pi_{\phi'}$, thereby improving stability in offline training. Additionally, we find that clipping the $Q_\theta$, that is, constraining it to be upper-bounded by a constant $\frac{1}{1-\gamma}$, helps prevent overestimation in the early stage of training and thereby stabilizes the update process.

The algorithm involves computing expectations. We approximate them using Monte Carlo estimators. Concretely, we use standard Monte Carlo sampling for expectations over states, and importance sampling for expectations over actions \cite{guo2022model}. We perform importance sampling by drawing actions from the current policy $\pi_{\phi}$:
\begin{equation}
	\begin{aligned}
		\mathbb{E}_{\pi_{\phi'}}\bigg[\exp\big(\frac{1}{\alpha}Q(s,a)&\big)\bigg]=\mathbb{E}_{\pi_{\phi}}\bigg[\frac{\pi_{\phi'}}{\pi_{\phi}}\exp\big(\frac{1}{\alpha}Q(s,a)\big)\bigg] \\&
		\approx\frac{1}{n}\sum_{i=1}^n\bigg[\frac{\pi_{\phi'}}{\pi_{\phi}}\exp\big(\frac{1}{\alpha}Q(s,a)\big)\bigg].
	\end{aligned}
\end{equation}

Intuitively, because the model ensemble captures most of the available information about the environment and we explicitly optimize against worst-case dynamics, sampling actions from the current policy encourages exploration without introducing additional approximation bias, which in turn improves performance.
\begin{figure*}[t!]
	\centering
	\includegraphics[width=0.24\textwidth]{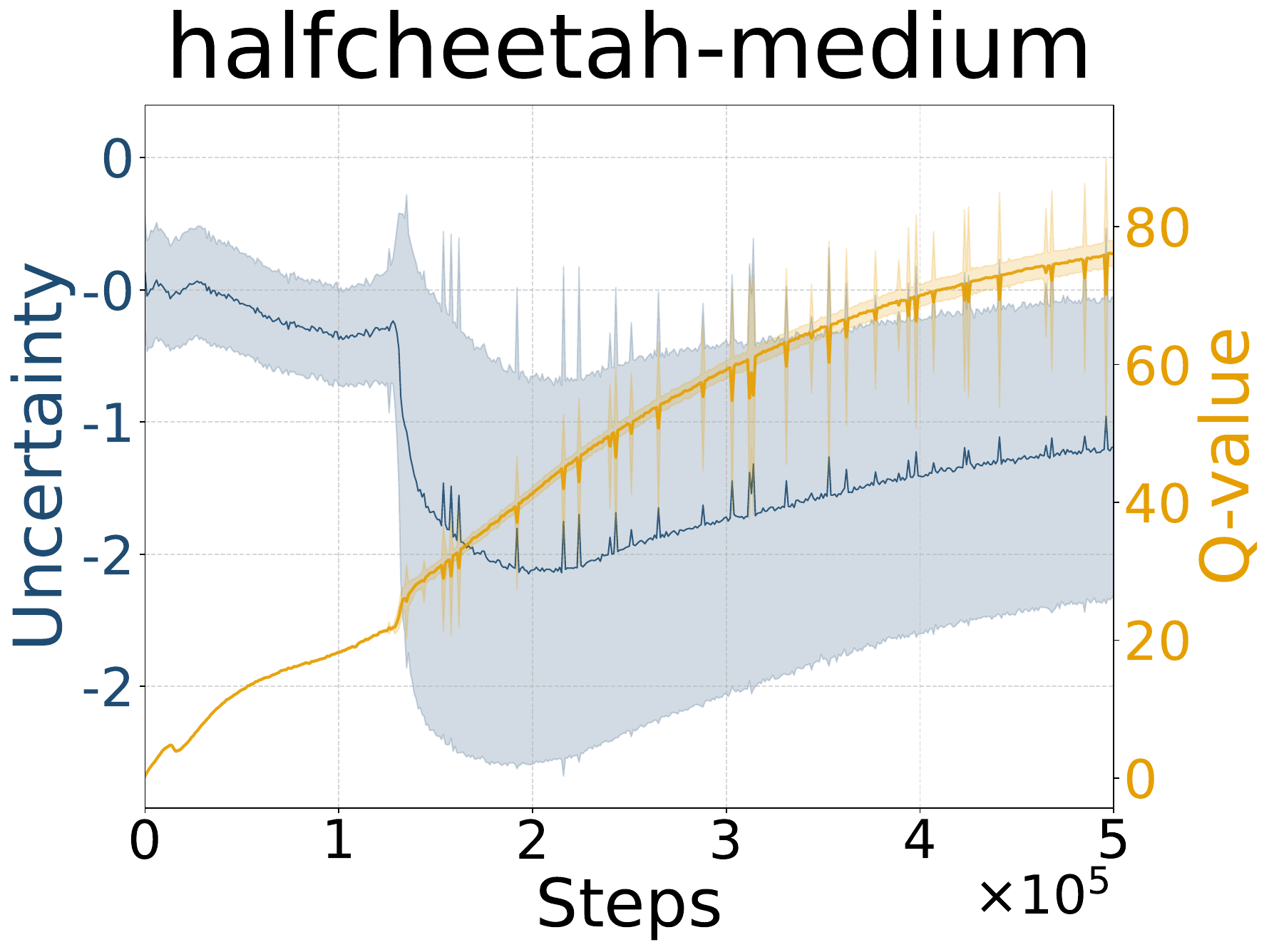}
	\includegraphics[width=0.24\textwidth]{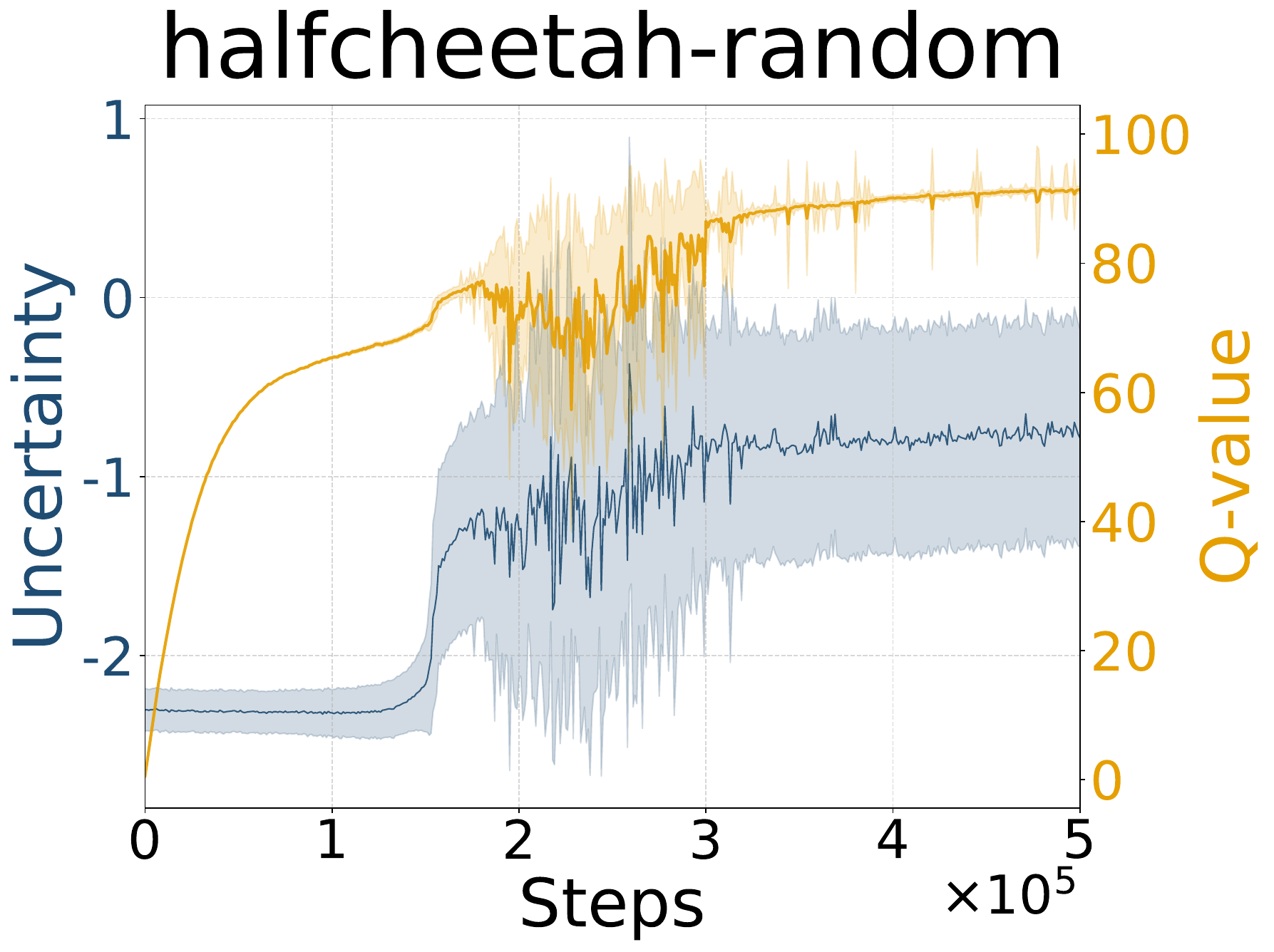}
	\includegraphics[width=0.24\textwidth]{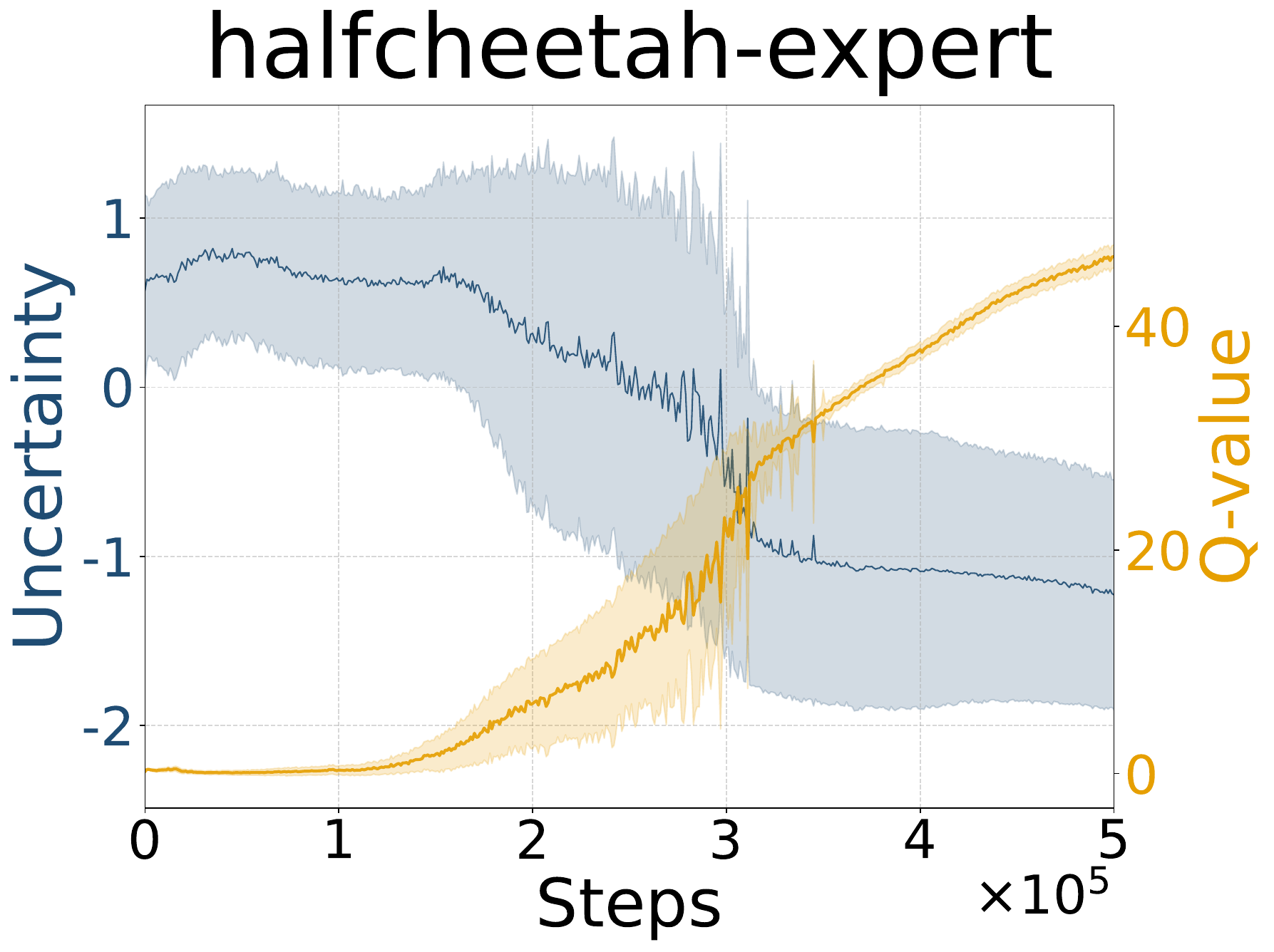}
	\includegraphics[width=0.24\textwidth]{./plots/halfcheetah-expert_qvalue.pdf}\\
	\includegraphics[width=0.24\textwidth]{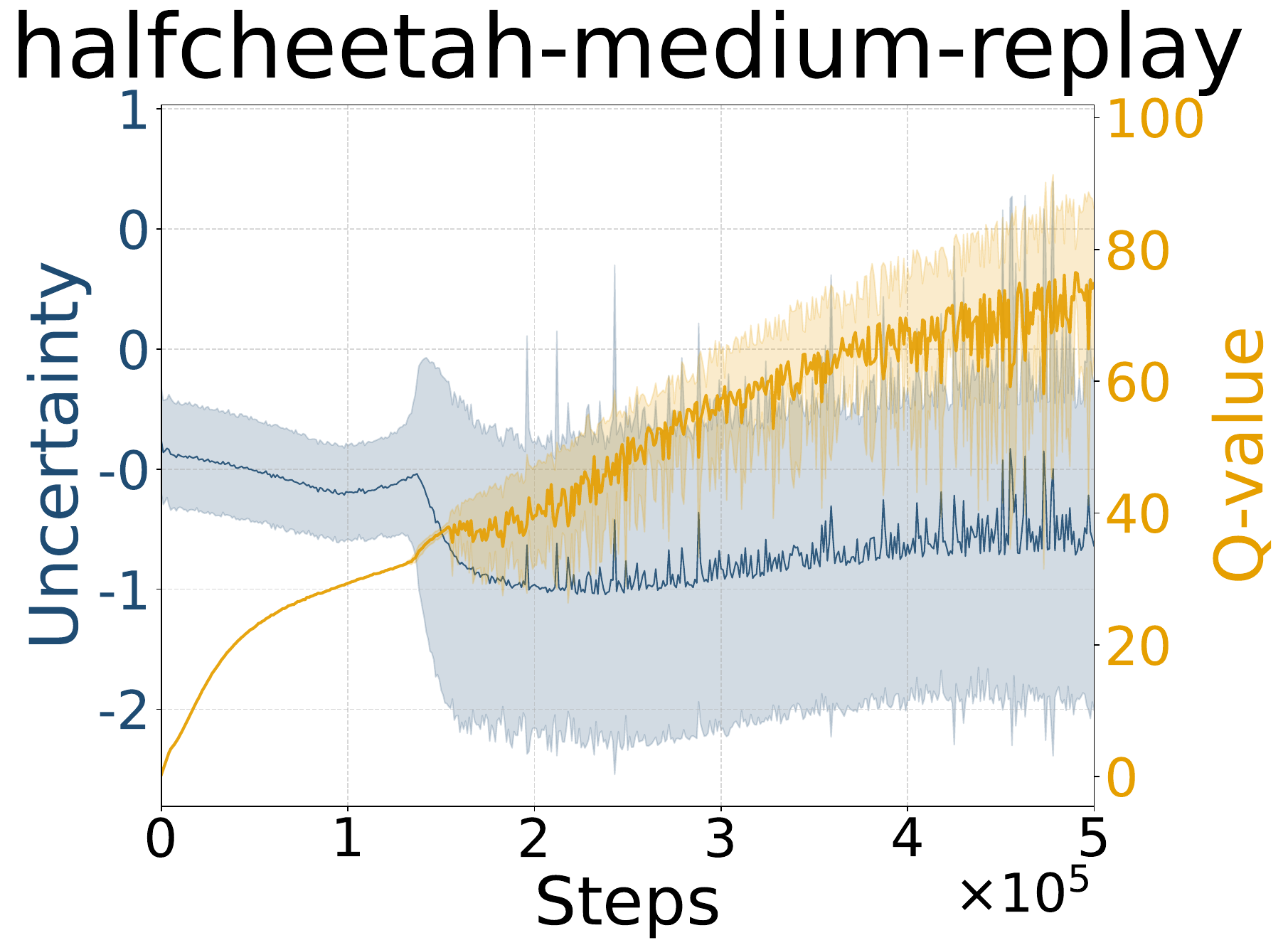}
	\includegraphics[width=0.24\textwidth]{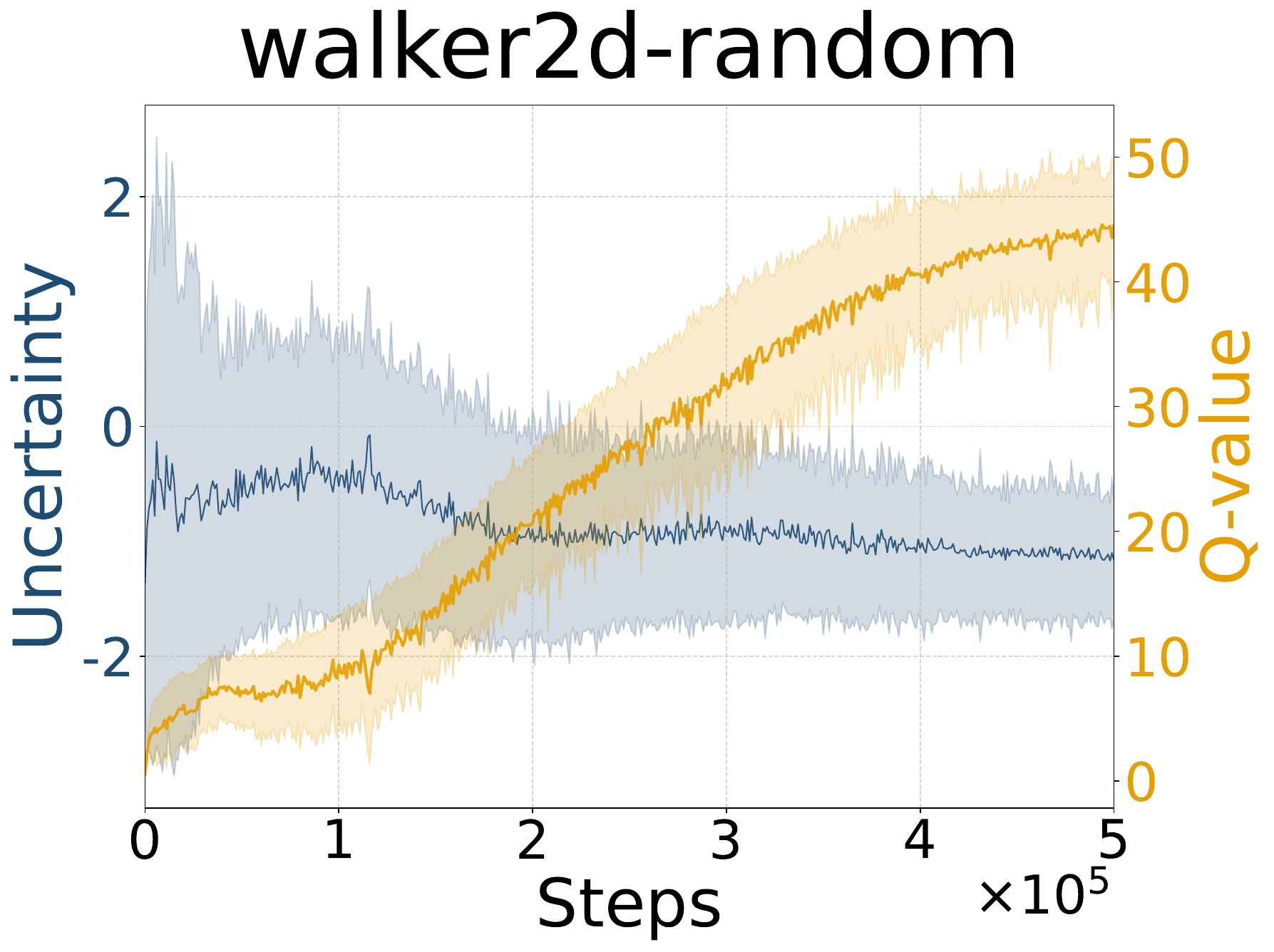}
	\includegraphics[width=0.24\textwidth]{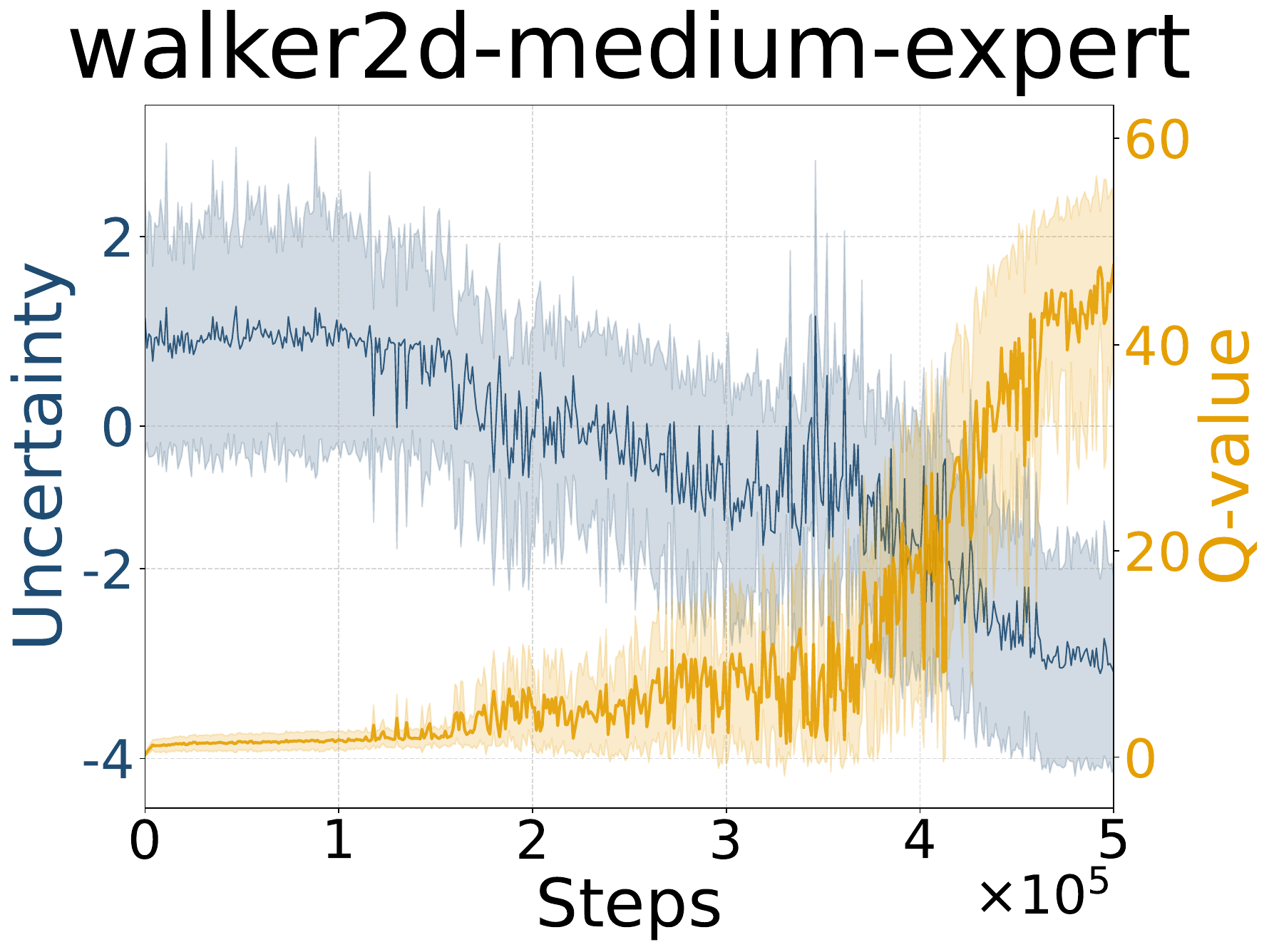}
	\includegraphics[width=0.24\textwidth]{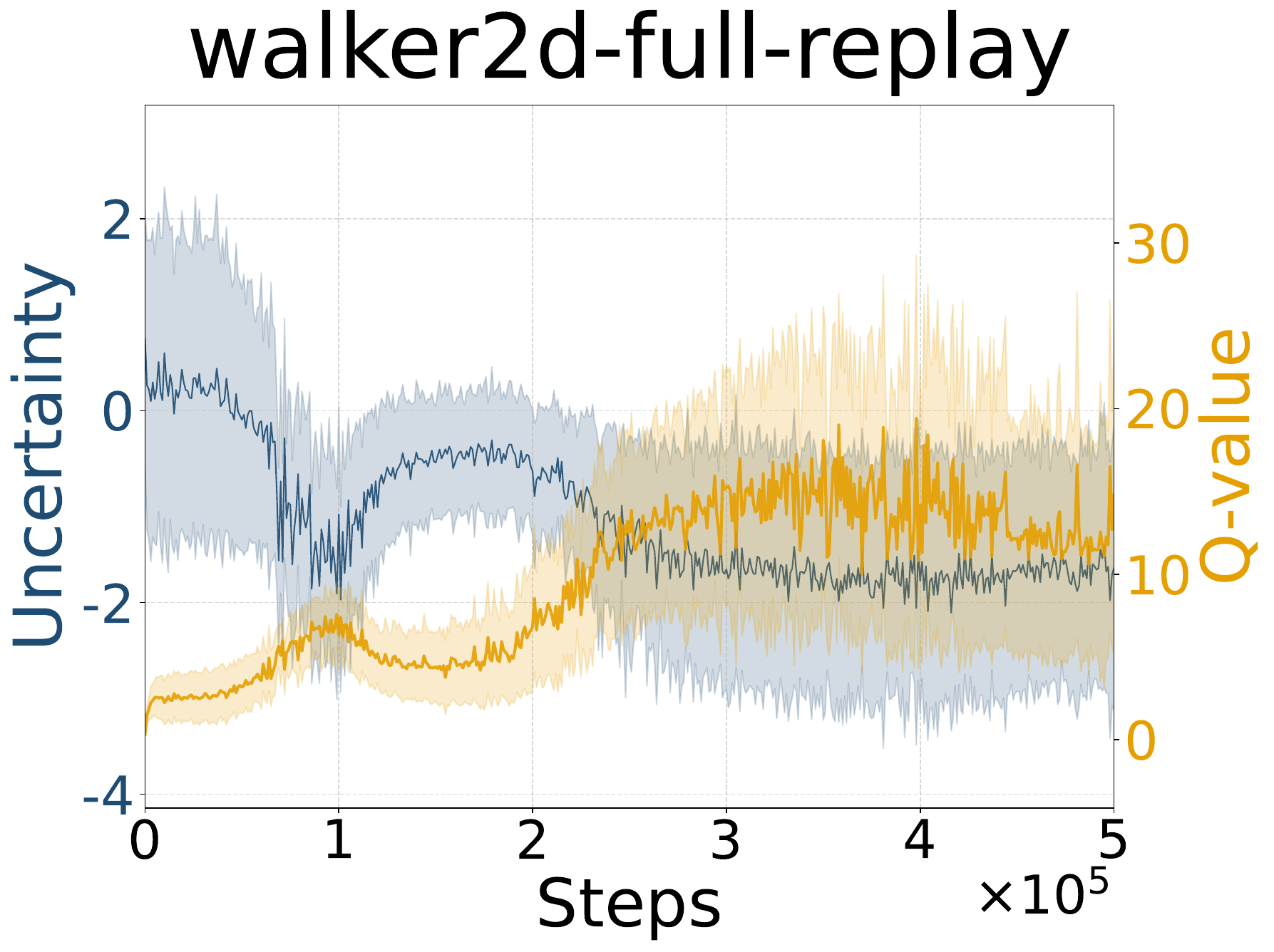}\\
	\includegraphics[width=0.24\textwidth]{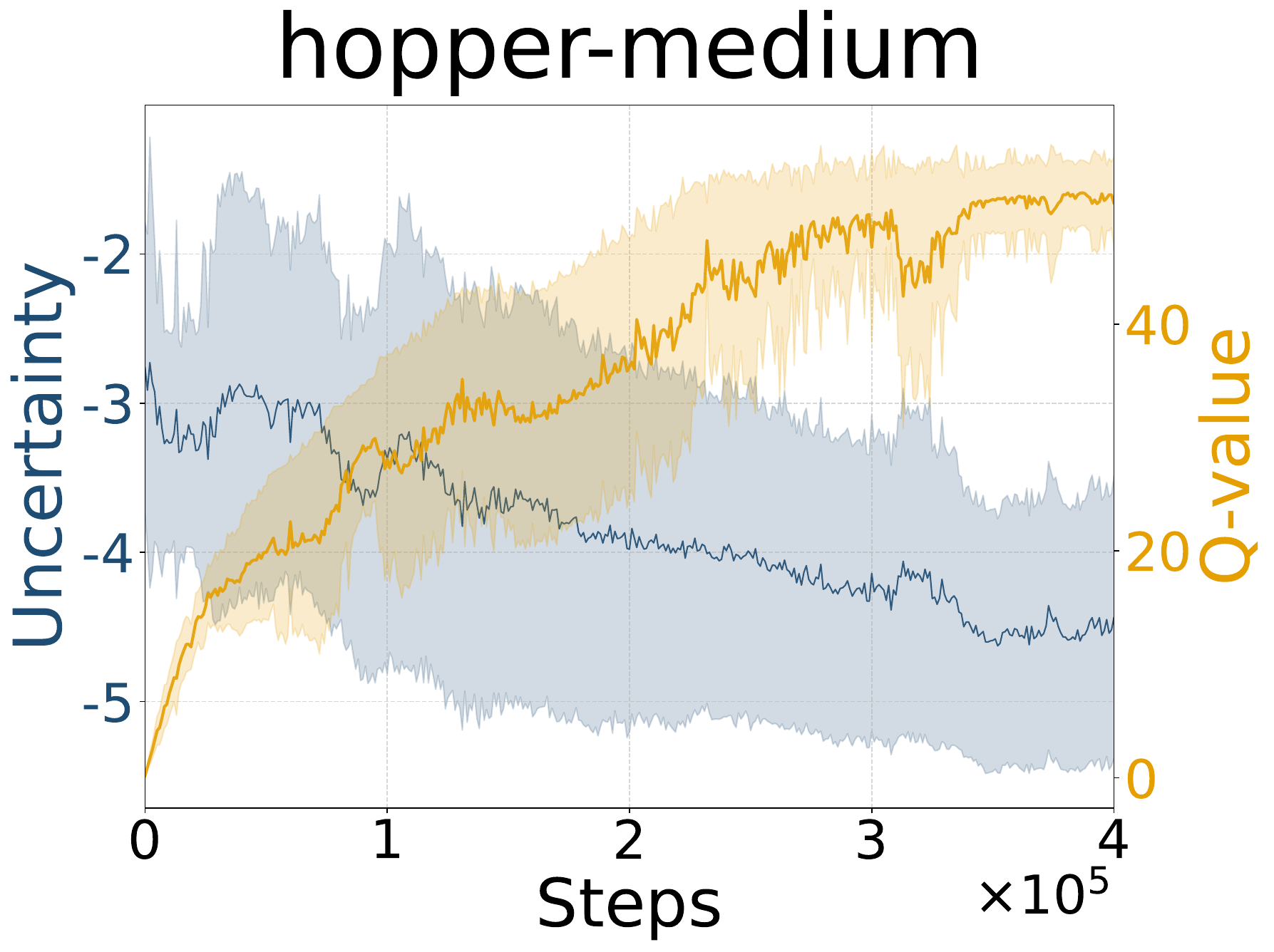}
	\includegraphics[width=0.24\textwidth]{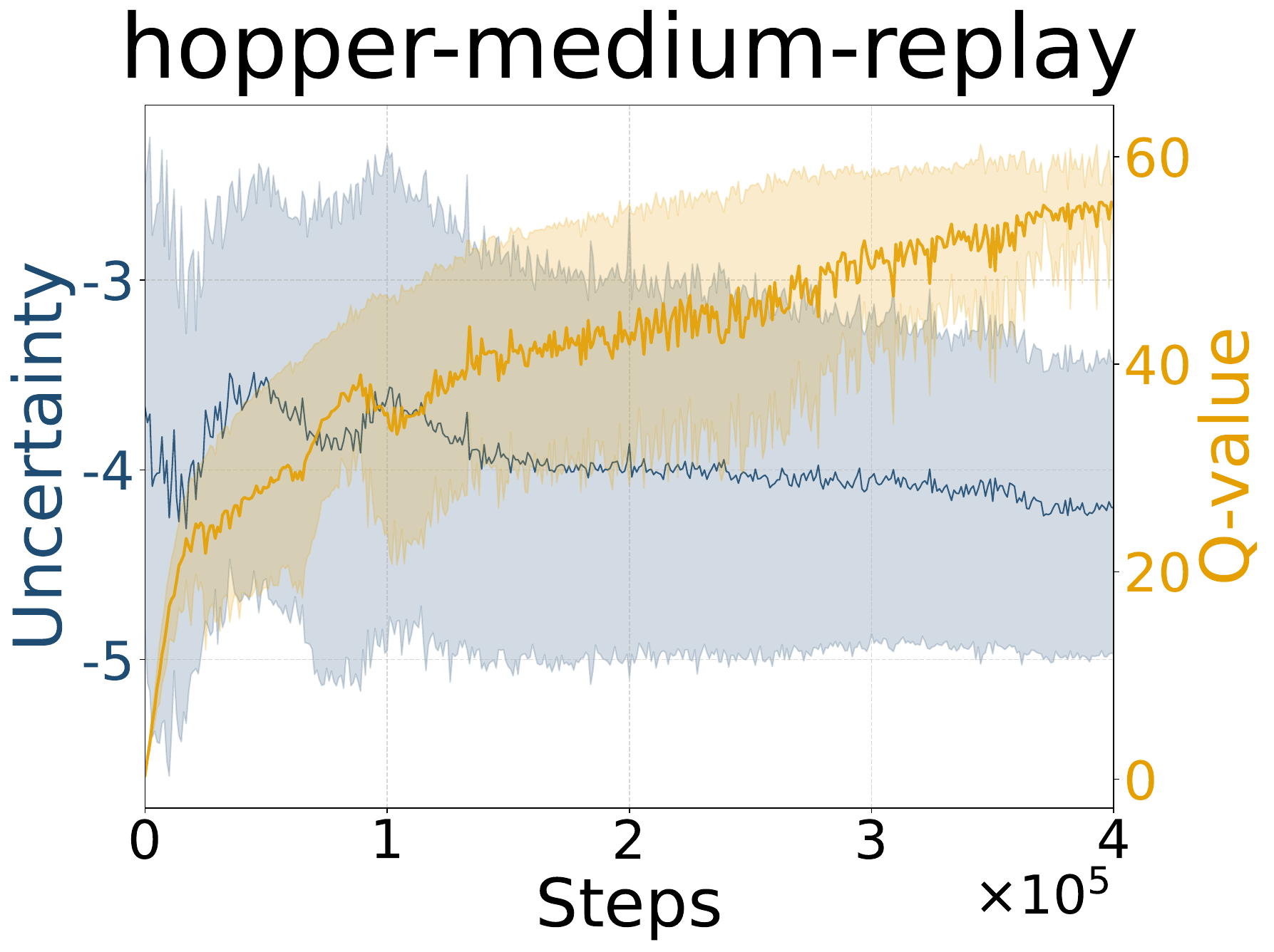}
	\includegraphics[width=0.24\textwidth]{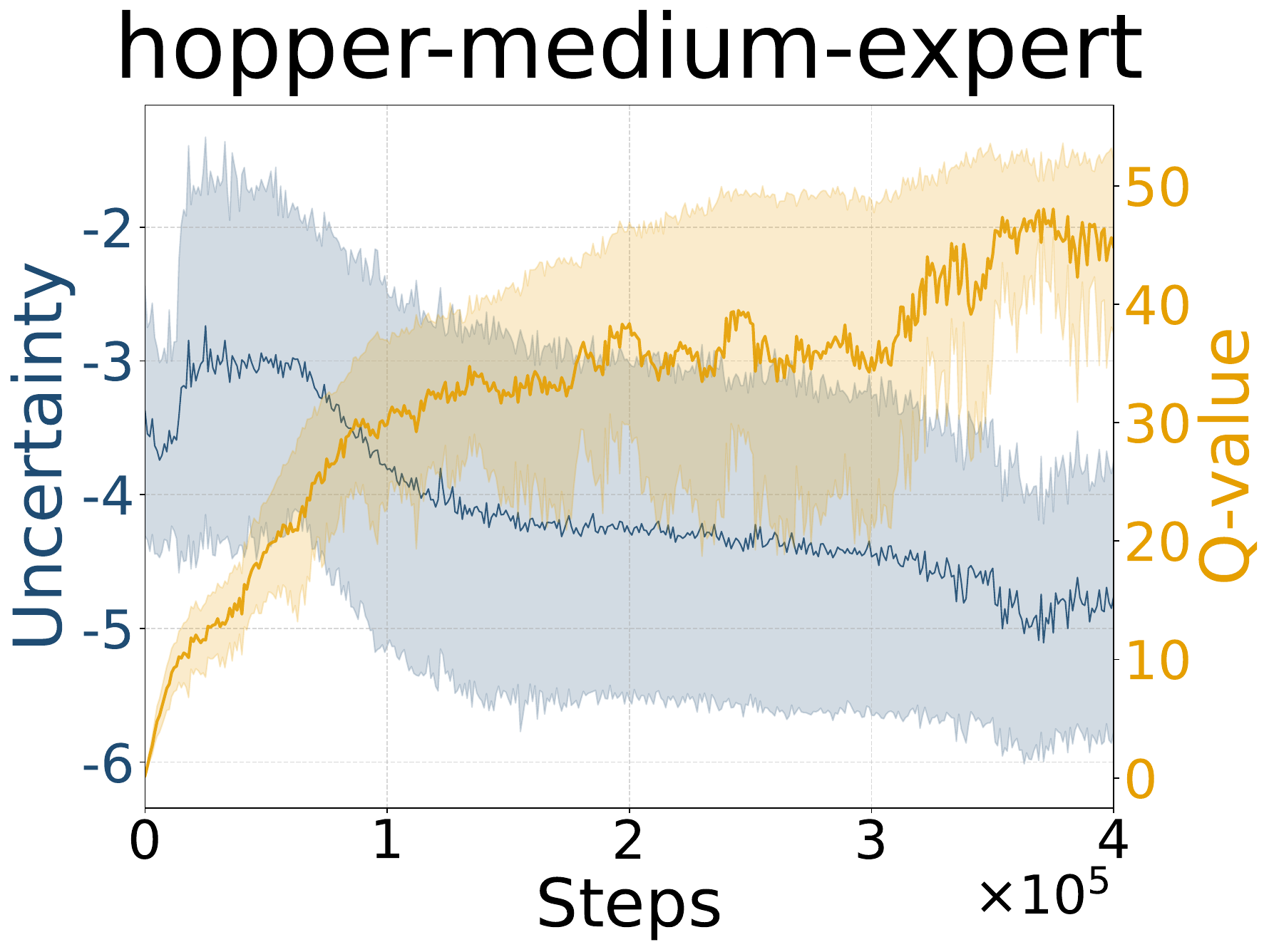}
	\includegraphics[width=0.24\textwidth]{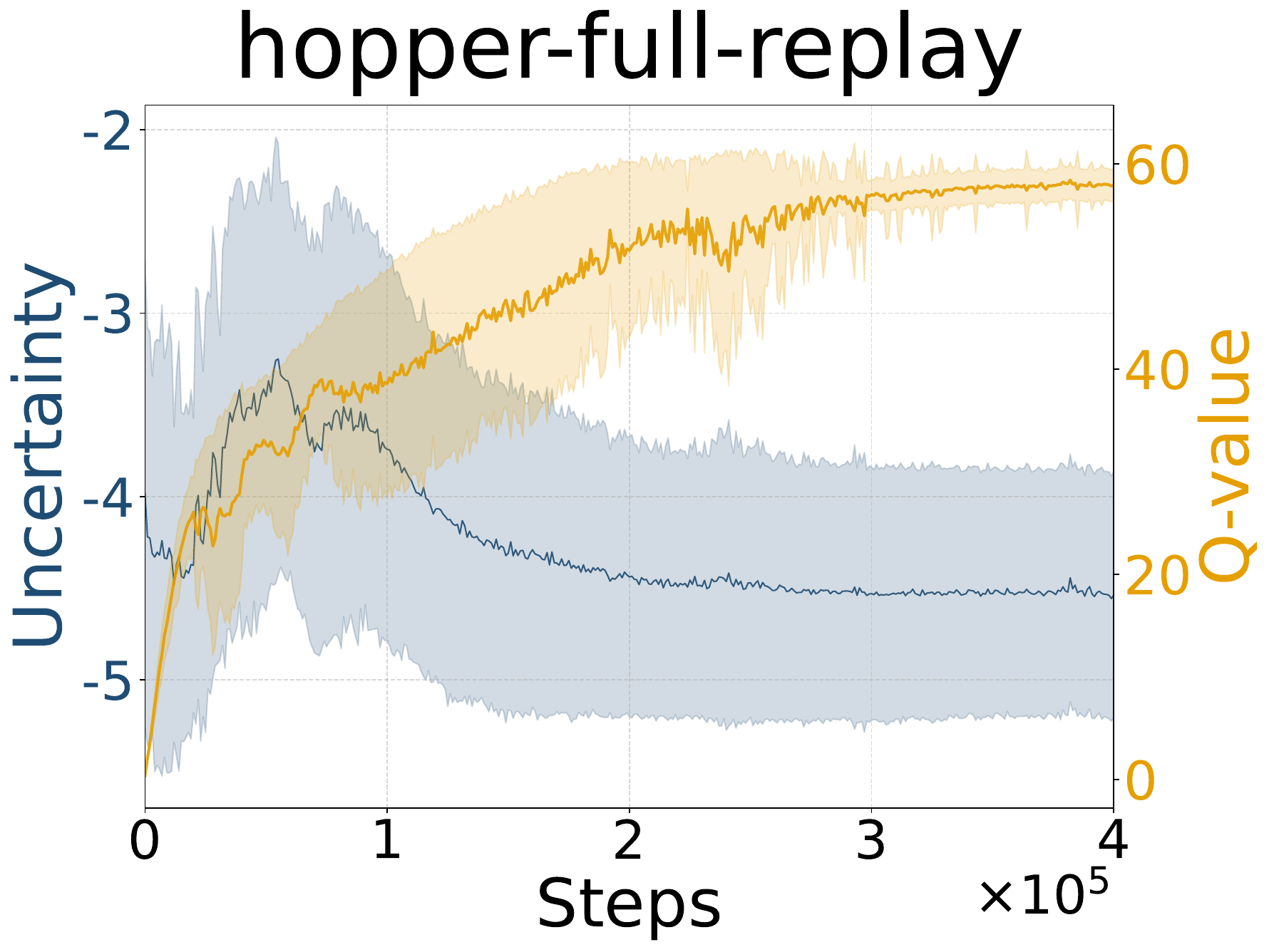}
	\caption{Evolution of the learned $Q$-values and predictive uncertainty for encountered state--action pairs during training. For a given $(s,a)$, uncertainty is quantified as the log standard deviation of next-state predictions over the model ensemble.}
	\label{exp_fig2:uncertainty}
\end{figure*}

\section{Experimental Results}
In this section, we evaluate RRPI on D4RL benchmarks \cite{fu2020d4rl} with three goals. First, we assess overall performance against strong baselines to verify the effectiveness of the proposed robust objective. Second, we analyze uncertainty estimation to examine whether the learned policy behaves conservatively in high-uncertainty regions, which is critical under distribution shift in offline settings. Third, we conduct ablation studies to isolate the contribution of the worst-case optimization and validate that robustness is essential for the gains.
\subsection{Performance}

We compare our method with a broad set of state-of-the-art baselines, including model-free algorithms such as CQL \cite{kumar2020conservative}, DMG \cite{maodoubly}, and EPQ \cite{NEURIPS2024_fdb11be1}, as well as model-based approaches including MOReL \cite{kidambi2020morel}, RAMBO \cite{rigter2022rambo}, PMDB \cite{guo2022model}, and ADM \cite{lin2025anystep}. As shown in Table~\ref{tab:d4rl_performance}, our
method achieves the best average performance on D4RL benchmarks. Reference scores are taken from the original papers; for methods without reported numbers (e.g., ADM and DMG), we reproduce results by following the hyperparameter settings from the respective publications. In particular, compared with PMDB, our method outperforms it on 11 out of 18 environments and remains competitive on the remaining 7. This result suggests that robust optimization can provide better resilience under extreme transition perturbations than approaches that target percentile-based performance.

We attribute the strong performance to two factors. First, the robust objective explicitly optimizes against transition uncertainty, which mitigates compounding model bias when the learned policy induces state-action distributions that deviate from the dataset. Second, the regularized policy updates improve training stability by preventing overly aggressive policy shifts, enabling the algorithm to more effectively exploit limited high-quality data. These properties are reflected by the consistently strong results on the ``medium,'' ``medium-expert,'' and ``full-replay'' datasets. 

\begin{table*}[t!]
	\centering
	\caption{Ablation study results. We report the percentage drop in score ($\downarrow$) and the fold change in standard deviation (shown as $\uparrow/\downarrow$). The symbol ``x'' denotes the multiplicative factor (e.g., $\uparrow$10x indicates a ten-fold increase).}
	\label{tab:ablation_study}
	
	\setlength{\tabcolsep}{15.5pt}
	\adjustbox{width=0.99\textwidth}
	{
		\begin{tabular}{l|cccc}
			\toprule
			\textbf{Dataset} & \textbf{HalfCheetah} & \textbf{Hopper} & \textbf{Walker2d} \\
			\midrule
			
			{Random} 
			
			& $\downarrow$7.4\% ($\uparrow$7.4x) & $\downarrow$71.9\% ($\uparrow$1.4x)   & $\downarrow$4.3\% ($\uparrow$2.5x) \\
			
			{Medium}  
			& $\downarrow$12.4\% ($\uparrow$3.9x)  & $\downarrow$10.9\% ($\uparrow$16.1x)   & $\downarrow$17.1\% ($\uparrow$17.1x) \\
			
			{Expert} 
			& $\downarrow$1.2\% ($\uparrow$2.2x)  & $\downarrow$5.3\% ($\uparrow$7.1x)  & $\downarrow$4.1\% ($\uparrow$6.3x) \\
			
			{Medium-Expert}
			& $\downarrow$7.8\% ($\uparrow$2.4x)  & $\downarrow$10.3\% ($\uparrow$33.7x)    & $\downarrow$14.7\% ($\uparrow$2.7x) \\
			
			{Medium-Replay} 
			& $\downarrow$10.7\% ($\uparrow$2.3x)    & $\downarrow$9.1\% ($\uparrow$25.3x)    & $\downarrow$31.5\% ($\uparrow$3.8x) \\
			
			{Full-Replay} 
			& $\downarrow$4.6\% ($\uparrow$4.2x)   & $\downarrow$7.8\% ($\uparrow$18.1x)  & $\downarrow$17.1\% ($\uparrow$19.0x) \\
			
			\bottomrule
		\end{tabular}
	}
\end{table*}

\subsection{Uncertainty Estimation}
Although RRPI does not require an explicit uncertainty penalty during training, the proposed robust formulation induces an implicit sensitivity to epistemic uncertainty via the inner minimization over the uncertainty set $P$. To make this effect measurable, we report an ensemble-based uncertainty proxy along trajectories generated by the learned policy. Concretely, for each encountered state-action pair $(s,a)$, we evaluate all models in the learned dynamics ensemble and compute the standard deviation of the predicted next states. This scalar summarizes model disagreement and serves as a practical indicator of epistemic uncertainty in model-based offline RL \cite{yu2020mopo}. Uncertainty traces and evolving learned $Q$-values are shown in Figure~\ref{exp_fig2:uncertainty}.

We observe that uncertainty spikes coincide with sharp drops in $Q(s,a)$, and the learned policy progressively avoids high-uncertainty regions, yielding a smoother $Q$-landscape. We attribute this phenomenon to the interaction between worst-case dynamics selection and regularized policy improvement. When the ensemble exhibits high disagreement at $(s,a)$, the uncertainty set contains plausible models that predict substantially different next states and rewards. The inner minimization then selects an adverse transition within $P$, reducing the robust Bellman backup at $(s,a)$ and propagating pessimism to predecessor states. The subsequent KL-regularized policy update limits abrupt distributional shifts while still preferring actions with higher robust $Q$-values, causing the learned policy to down-weight uncertain actions over iterations. Unlike heuristic uncertainty penalties, the reduced values in high-disagreement regions here arise directly from optimizing a principled worst-case objective, providing a coherent explanation for the observed coupling between uncertainty spikes and $Q$-value drops.

\subsection{Ablation Study}
To isolate the effect of robust optimization, we conduct an ablation study in which we remove the worst-case model selection. Specifically, instead of optimizing against the worst-case model in the uncertainty set, the ablated variant randomly samples a single dynamics model from the uncertainty set and performs rollouts under this sampled model. As shown in Table~\ref{tab:ablation_study}, the ablated variant's significant performance degradation validates the robust formulation in RRPI. We attribute this degradation to the inability of random model selection to consistently expose the policy to adverse transitions. Without the inner worst-case optimization, the learned policy can overfit to benign models in the ensemble and become brittle when evaluated under more challenging dynamics, leading to reduced returns and increased variance.

\section{Conclusions}

We study robust model-based offline reinforcement learning by casting transition dynamics as a decision variable within an uncertainty set and optimizing policies against worst-case dynamics. To make this robust formulation practical, we propose Robust Regularized Policy Iteration (RRPI), which replaces the intractable max-min bilevel objective with a tractable KL-regularized surrogate and solves it via iterative policy evaluation and improvement.

Our main contributions are threefold. First, we propose a robust regularized Bellman operator and derive a corresponding soft-greedy policy update that admits efficient implementation. Second, we establish theoretical guarantees for the operator’s contraction and fixed-point properties, as well as the policy iteration’s monotonic improvement and convergence. Third, experiments on D4RL benchmarks demonstrate that RRPI achieves strong performance and improves robustness compared with recent baselines.

Moving forward, we aim to tighten the theoretical-empirical gap in uncertainty estimation and explore the integration of multimodal observations (e.g., vision \cite{SUN2026112800,zhang2026pointcotmultimodalbenchmarkexplicit,zhang2026igasa}) into the RRPI framework to handle increasingly complex decision-making tasks.

%
%

%
%
%
\bibliographystyle{splncs04}
\bibliography{mybib}
\appendix
\setcounter{theorem}{0}
\section{Proofs}
\begin{theorem}
	\label{thm:rrpo}
	The robust regularized Bellman operator $\mathcal{T}$ is a $\gamma$-contraction mapping under the $\|\cdot\|_{\infty}$ norm. Starting from any initial action-value function $Q:S\times A\to\mathbb{R}$ and repeatedly applying $\mathcal{T}$, the resulting sequence converges to a unique fixed point $Q^*$. Moreover, the optimal robust policy for the surrogate objective $\widehat{\eta}(\pi,p,\mu)$ is given by
	\begin{equation}
		\label{eq:opt_policy}
		\pi^*(a|s)\propto \mu(a|s)\exp\!\left(\frac{1}{\alpha}Q^*(s,a)\right).
	\end{equation}
\end{theorem}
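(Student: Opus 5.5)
The plan has three steps: prove the contraction, apply Banach's fixed-point theorem, and then identify the maximizing policy through the Gibbs variational identity. Throughout I read the operator as
\[
\mathcal{T}Q(s,a)=r(s,a)+\gamma\min_{p\in P}\mathbb{E}_{s'\sim p(\cdot|s,a)}\big[\Phi_Q(s')\big],\qquad \Phi_Q(s')=\alpha\log\mathbb{E}_{a'\sim\mu(\cdot|s')}\exp\!\big(Q(s',a')/\alpha\big).
\]
I assume rewards are bounded, so that $\mathcal{T}$ maps bounded functions to bounded functions. I also assume the minimum over $P$ is attained, or else I replace it by an infimum; none of the estimates below change.

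\textbf{Contraction.} I first show that the soft value map $Q\mapsto\Phi_Q$ is monotone and translation-equivariant: $Q\le Q'$ pointwise implies $\Phi_Q\le\Phi_{Q'}$, and $\Phi_{Q+c}=\Phi_Q+c$ for any constant $c$.

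Set $\delta=\|Q_1-Q_2\|_\infty$. Then $Q_1\le Q_2+\delta$, so $\Phi_{Q_1}\le\Phi_{Q_2}+\delta$, and symmetrically $\Phi_{Q_2}\le\Phi_{Q_1}+\delta$. This gives $\|\Phi_{Q_1}-\Phi_{Q_2}\|_\infty\le\delta$.

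For the robust part I use the elementary inequality $|\min_p f(p)-\min_p g(p)|\le\sup_p|f(p)-g(p)|$. Taking $f(p)=\mathbb{E}_p\Phi_{Q_1}$ and $g(p)=\mathbb{E}_p\Phi_{Q_2}$, each difference $|f(p)-g(p)|$ is at most $\delta$, because an expectation under a probability measure cannot exceed the sup norm. Multiplying by $\gamma$ yields $\|\mathcal{T}Q_1-\mathcal{T}Q_2\|_\infty\le\gamma\delta$.

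\textbf{Fixed point.} Since $\gamma<1$ and the space of bounded functions on $S\times A$ with $\|\cdot\|_\infty$ is complete, Banach's fixed-point theorem gives a unique fixed point $Q^*$. It also gives convergence of $\mathcal{T}^kQ$ to $Q^*$ from any bounded initial $Q$, at a geometric rate.

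\textbf{Optimal policy.} This is the step I expect to be the main obstacle. The key tool is the Gibbs variational principle: for each state $s$,
\[
\Phi_Q(s)=\max_{\pi(\cdot|s)}\Big\{\mathbb{E}_{\pi}[Q(s,\cdot)]-\alpha D_{\mathrm{KL}}(\pi\|\mu)\Big\},
\]
with the maximum attained uniquely at $\pi\propto\mu\exp(Q/\alpha)$. This follows from nonnegativity of the KL divergence between $\pi$ and that Gibbs distribution.

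Writing $\mathcal{T}^{\pi,p}$ for the policy- and model-specific regularized evaluation operator, I would interpret $Q^*$ as the robust soft $Q$-value of the max-min surrogate problem. The argument has two halves.

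First, for the Gibbs policy $\pi^*$ built from $Q^*$, the maximum in the variational identity is attained at every state. Hence $Q^*$ is the fixed point of the robust evaluation operator
\[
Q\mapsto r+\gamma\min_{p\in P}\mathbb{E}_p\big[\mathbb{E}_{\pi^*}Q-\alpha D_{\mathrm{KL}}(\pi^*\|\mu)\big],
\]
so $Q^*$ equals the worst-case surrogate value of $\pi^*$.

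Second, for any other policy $\pi$, the corresponding robust evaluation operator $\mathcal{T}^{\pi}_{\mathrm{rob}}$ (the same map with $\pi^*$ replaced by $\pi$) is dominated pointwise by $\mathcal{T}$, because the variational identity gives an inequality for non-optimal policies. Both operators are monotone contractions. Iterating from $Q^*$ gives $(\mathcal{T}^{\pi}_{\mathrm{rob}})^kQ^*\le Q^*$, which converges to the robust value of $\pi$, so that value is at most $Q^*$. Integrating over $\rho_0$, and using $\min_p\widehat{\eta}(\pi,p,\mu)=\mathbb{E}_{\rho_0}V^\pi_{\mathrm{rob}}$, shows that $\pi^*$ maximizes $\min_p\widehat{\eta}(\pi,p,\mu)$.

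The delicate point is this last identification. It requires rectangularity of $P$, so that the minimum over $p$ inside the Bellman recursion coincides with the minimum over whole kernels in $\widehat{\eta}$. I will state that $(s,a)$-rectangularity assumption explicitly, consistent with the paper's earlier discussion. I will also note that the minimizer $p$ depends on $Q$ but not on $\pi$ beyond the value it produces, which is what allows the max and min to be interchanged.
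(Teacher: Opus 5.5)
Your proposal is correct and follows essentially the same route as the paper. Both arguments use non-expansiveness of the log-sum-exp map together with $|\min_p f(p)-\min_p g(p)|\le\sup_p|f(p)-g(p)|$ for the contraction, Banach for the fixed point, and the Gibbs variational identity plus a monotone comparison argument for optimality of $\pi^*$. The one mirror-image difference is that you iterate the policy-specific robust operator downward from $Q^*$, while the paper iterates $\mathcal{T}$ upward from $Q_{\mathcal{P}}^{\pi}$.

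Your explicit $(s,a)$-rectangularity assumption is what the paper relies on tacitly when it identifies the Bellman-recursion value with $\min_{p}\widehat{\eta}(\pi,p,\mu)$ via ``the worst-case transitions for $\pi$.'' Your closing remark about interchanging max and min is unnecessary, since the comparison argument never swaps them.
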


\begin{proof}
Let \(Q_1,Q_2:\mathcal{S}\times\mathcal{A}\to\mathbb{R}\) be arbitrary. For any state \(s'\), define
\[
f_Q(s')=\alpha\log\mathbb{E}_{a'\sim\mu(\cdot|s')}\exp\Bigl(\frac{1}{\alpha}Q(s',a')\Bigr).
\]
For a fixed pair \((s,a)\), we have
\[
|\mathcal{T}Q_1(s,a)-\mathcal{T}Q_2(s,a)|
=\gamma\Bigl|\min_{p\in\mathcal{P}}\mathbb{E}_{p}[f_{Q_1}]-\min_{p\in\mathcal{P}}\mathbb{E}_{p}[f_{Q_2}]\Bigr|.
\]
A basic property of the minimum gives
\[
\Bigl|\min_{i}x_i-\min_{i}y_i\Bigr|\le\max_{i}|x_i-y_i|.
\]
Applying this to the finite or compact set \(\mathcal{P}\) (or by a straightforward extension to a family of expectations) yields
\begin{equation*}
	\begin{aligned}
		|\mathcal{T}Q_1(s,a)-\mathcal{T}Q_2(s,a)|
		&\le\gamma\max_{p\in\mathcal{P}}\bigl|\mathbb{E}_{p}[f_{Q_1}-f_{Q_2}]\bigr|
		\\&\le\gamma\max_{p\in\mathcal{P}}\mathbb{E}_{p}\bigl|f_{Q_1}-f_{Q_2}\bigr|
		\\&\le\gamma\max_{s'}|f_{Q_1}(s')-f_{Q_2}(s')|.
	\end{aligned}
\end{equation*}
We now bound \(|f_{Q_1}(s')-f_{Q_2}(s')|\). Let \(\epsilon=\|Q_1-Q_2\|_\infty\). For any \(a'\),
\[
Q_1(s',a')\le Q_2(s',a')+\epsilon.
\]
Then, we have
\[
\exp\Bigl(\frac{1}{\alpha}Q_1(s',a')\Bigr)\le e^{\epsilon/\alpha}\exp\Bigl(\frac{1}{\alpha}Q_2(s',a')\Bigr).
\]
Taking expectation with respect to \(\mu\) and then the logarithm gives
\[
\alpha\log\mathbb{E}_\mu e^{Q_1/\alpha}\le \alpha\log\bigl(e^{\epsilon/\alpha}\mathbb{E}_\mu e^{Q_2/\alpha}\bigr)
= \alpha\log\mathbb{E}_\mu e^{Q_2/\alpha}+\epsilon.
\]
The reverse inequality follows similarly by swapping \(Q_1\) and \(Q_2\). Consequently,
\[
|f_{Q_1}(s')-f_{Q_2}(s')|\le\epsilon.
\]
Thus for all \((s,a)\),
\[
|\mathcal{T}Q_1(s,a)-\mathcal{T}Q_2(s,a)|\le\gamma\|Q_1-Q_2\|_\infty,
\]
and taking the supremum over \((s,a)\) yields \(\|\mathcal{T}Q_1-\mathcal{T}Q_2\|_\infty\le\gamma\|Q_1-Q_2\|_\infty\). Hence \(\mathcal{T}\) is a \(\gamma\)-contraction. By the Banach fixed‑point theorem, there exists a unique fixed point \(Q^*\) satisfying \(\mathcal{T}Q^*=Q^*\).

We first recall the well‑known duality relation
\[
\alpha\log\mathbb{E}_{a'\sim\mu}\exp\Bigl(\frac{1}{\alpha}Q(s',a')\Bigr)
= \max_{\pi(\cdot|s')}\Bigl\{\mathbb{E}_{a'\sim\pi}[Q(s',a')]-\alpha D_{\mathrm{KL}}\bigl(\pi\|\mu\bigr)\Bigr\},
\]
and the maximiser is attained at
\[
\pi^*(a'|s')\propto\mu(a'|s')\exp\Bigl(\frac{1}{\alpha}Q(s',a')\Bigr).
\tag{*}
\]

For any policy \(\pi\), define the robust regularized action‑value function \(Q_\mathcal{P}^\pi\) as the unique fixed point of the operator
\[
\mathcal{T}^\pi Q(s,a)=r(s,a)+\gamma\min_{p\in\mathcal{P}}\mathbb{E}_{s'\sim p}\Bigl[\mathbb{E}_{a'\sim\pi}\bigl[Q(s',a')-\alpha\log\frac{\pi(a'|s')}{\mu(a'|s')}\bigr]\Bigr].
\]
One easily verifies that \(\mathcal{T}^\pi\) is also a \(\gamma\)-contraction (the proof follows the same steps as for \(\mathcal{T}\), using the fact that the term inside the expectation is a function of \(s'\) only). Therefore \(Q_\mathcal{P}^\pi\) is well defined and satisfies the Bellman equation
\[
Q_\mathcal{P}^\pi(s,a)=r(s,a)+\gamma\min_{p\in\mathcal{P}}\mathbb{E}_{s'\sim p}\Bigl[\mathbb{E}_{a'\sim\pi}\bigl[Q_\mathcal{P}^\pi(s',a')-\alpha\log\frac{\pi(a'|s')}{\mu(a'|s')}\bigr]\Bigr].
\]

Now let \(Q^*\) be the fixed point of \(\mathcal{T}\) and define \(\pi^*\) by (*). By the duality relation,
\[
\alpha\log\mathbb{E}_\mu e^{Q^*/ \alpha}
= \mathbb{E}_{a'\sim\pi^*}[Q^*(s',a')]-\alpha D_{\mathrm{KL}}(\pi^*\|\mu)\qquad\forall s'.
\]
Substituting this into \(\mathcal{T}Q^*=Q^*\) gives
\[
Q^*(s,a)=r(s,a)+\gamma\min_{p\in\mathcal{P}}\mathbb{E}_{s'\sim p}\Bigl[\mathbb{E}_{a'\sim\pi^*}[Q^*(s',a')]-\alpha D_{\mathrm{KL}}(\pi^*\|\mu)\Bigr]
= \mathcal{T}^{\pi^*}Q^*(s,a).
\]
Hence \(Q^*\) is a fixed point of \(\mathcal{T}^{\pi^*}\); by uniqueness, \(Q^*=Q_\mathcal{P}^{\pi^*}\).

Take an arbitrary policy \(\pi\). From the duality relation we have the inequality
\[
\alpha\log\mathbb{E}_\mu e^{Q_\mathcal{P}^\pi/ \alpha}
\ge \mathbb{E}_{a'\sim\pi}[Q_\mathcal{P}^\pi(s',a')]-\alpha D_{\mathrm{KL}}(\pi\|\mu).
\]
Applying this pointwise yields
\begin{equation*}
	\begin{aligned}
		\mathcal{T}Q_\mathcal{P}^\pi(s,a)
		&\ge r(s,a)+\gamma\min_{p\in\mathcal{P}}\mathbb{E}_{s'\sim p}\Bigl[\mathbb{E}_{a'\sim\pi}[Q_\mathcal{P}^\pi(s',a')]-\alpha D_{\mathrm{KL}}(\pi\|\mu)\Bigr]
		\\& = \mathcal{T}^\pi Q_\mathcal{P}^\pi(s,a)=Q_\mathcal{P}^\pi(s,a).
	\end{aligned}
\end{equation*}
Thus \(\mathcal{T}Q_\mathcal{P}^\pi\ge Q_\mathcal{P}^\pi\) componentwise. Because \(\mathcal{T}\) is monotone (if \(Q_1\ge Q_2\) then \(\mathcal{T}Q_1\ge\mathcal{T}Q_2\)) and a contraction, iterating \(\mathcal{T}\) starting from \(Q_\mathcal{P}^\pi\) gives an increasing sequence that converges to the unique fixed point \(Q^*\); consequently \(Q^*\ge Q_\mathcal{P}^\pi\) everywhere.

Finally, for the initial state distribution \(\rho_0\) we have
\begin{equation*}
	\begin{aligned}
		\hat{\eta}(\pi^*,p^*,\mu)
		&= \mathbb{E}_{\rho_0,\pi^*}\Bigl[Q^*(s_0,a_0)-\alpha D_{\mathrm{KL}}(\pi^*\|\mu)\Bigr]
		\\& \ge \mathbb{E}_{\rho_0,\pi}\Bigl[Q_\mathcal{P}^\pi(s_0,a_0)-\alpha D_{\mathrm{KL}}(\pi\|\mu)\Bigr]
		= \hat{\eta}(\pi,p_\pi,\mu),
	\end{aligned}
\end{equation*}
where \(p^*\) and \(p_\pi\) denote the worst‑case transitions inside \(\mathcal{P}\) for \(\pi^*\) and \(\pi\) respectively. Since \(\pi\) is arbitrary, \(\pi^*\) attains the maximum of the robust regularized objective. 

\end{proof}

\begin{theorem}
	\label{thm:irpo}
	Starting from any stochastic initial policy $\pi_0:S\to\Delta(A)$ with full support (i.e., $\pi_0(a|s)>0$ for all $(s,a)\in S\times A$), we generate a sequence $\{\pi_i\}_{i\ge 0}$ by repeatedly solving the robust sregularized ubproblem with the reference policy set to the previous iterate:
	\begin{equation}
		\pi_{i+1}\in\arg\max_{\pi\in\Pi}\min_{p\in P}\widehat{\eta}(\pi,p,\mu=\pi_i).
	\end{equation}
	Then the sequence monotonically improves the original objective $J(\pi)\:=min_{p\in P}\eta(\pi,p)$ in the sense that $J(\pi_{i+1})\ge J(\pi_i)$ for all $i\ge 0$.
	Furthermore, the sequence converges to an optimal robust policy for the original (non-regularized) problem: for any state $s$ and actions $a,a'\in A$ satisfying
	$\lim_{i\to\infty}Q_{P}^{\pi_i}(s,a)>\lim_{i\to\infty}Q_{P}^{\pi_i}(s,a')$,
	we have $\pi_i(a\mid s)/\pi_i(a'\mid s)\to\infty$ as $i\to\infty$, where $Q_{P}^{\pi_i}$ denotes the action-value function under the worst-case dynamics in $P$.
\end{theorem}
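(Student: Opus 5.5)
The proof runs as a robust analogue of the standard mirror-descent / KL-regularized policy iteration argument. We work with state-action (or state) rectangular $P$, so that robust values satisfy Bellman equations with a pointwise minimization over $p$, exactly as used for $\mathcal{T}^\pi$ in the proof of Theorem~\ref{thm:rrpo}. Let $Q_P^{\pi}$ denote the unregularized robust action-value, i.e.\ the fixed point of
\[
\mathcal{B}^\pi Q(s,a)=r(s,a)+\gamma\min_{p\in P}\mathbb{E}_{s'\sim p}\mathbb{E}_{a'\sim\pi}[Q(s',a')].
\]
Then $J(\pi)=\mathbb{E}_{\rho_0,\pi}[Q_P^\pi]$. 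By Theorem~\ref{thm:rrpo} applied with $\mu=\pi_i$, the iterate satisfies $\pi_{i+1}(a|s)\propto\pi_i(a|s)\exp(Q_i^*(s,a)/\alpha)$, where $Q_i^*$ is the fixed point of $\mathcal{T}$ with reference $\pi_i$. Full support of $\pi_0$ propagates inductively, since the exponential factor is strictly positive. Hence every KL term is finite and every iterate is well defined.

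\textbf{Monotonicity.} The key is to compare $\pi_{i+1}$ with $\pi_i$ inside the regularized subproblem. The candidate $\pi=\mu=\pi_i$ has zero KL penalty, so its regularized robust value is exactly $Q_P^{\pi_i}$. The optimality part of Theorem~\ref{thm:rrpo} then gives
\[
Q_i^*=Q_{P,\mathrm{reg}}^{\pi_{i+1}}\ge Q_P^{\pi_i}\quad\text{pointwise}.
\]
Because the KL penalty is nonnegative, $Q_P^{\pi_{i+1}}\ge Q_{P,\mathrm{reg}}^{\pi_{i+1}}$. To see this, note that $\mathcal{B}^{\pi_{i+1}}Q\ge\mathcal{T}^{\pi_{i+1}}Q$ for every $Q$; together with monotonicity and contraction of both operators, this gives the ordering of their fixed points, by the same iteration argument used at the end of the proof of Theorem~\ref{thm:rrpo}. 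Chaining the two inequalities gives $Q_P^{\pi_{i+1}}\ge Q_P^{\pi_i}$ pointwise. Integrating against $\rho_0$ yields $J(\pi_{i+1})\ge J(\pi_i)$. A cleaner alternative is to verify directly that $\mathcal{B}^{\pi_{i+1}}Q_P^{\pi_i}\ge Q_P^{\pi_i}$, using the Gibbs variational inequality
\[
\mathbb{E}_{\pi_{i+1}}[Q]-\alpha D_{\mathrm{KL}}(\pi_{i+1}\|\pi_i)\ge\mathbb{E}_{\pi_i}[Q]
\]
at $Q=Q_P^{\pi_i}$, and then iterate. Since $\pi_{i+1}$ is soft-greedy with respect to $Q_i^*$ rather than $Q_P^{\pi_i}$, I would use the first chain as the main route.

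\textbf{Convergence.} The sequence $Q_P^{\pi_i}$ is pointwise nondecreasing and bounded above by $\|r\|_\infty/(1-\gamma)$, so it has a pointwise limit $Q_\infty$. Unrolling the update gives
\[
\pi_i(a|s)\propto\pi_0(a|s)\exp\!\Big(\tfrac1\alpha\sum_{k<i}Q_k^*(s,a)\Big),
\]
so the log-ratio satisfies
\[
\log\frac{\pi_i(a|s)}{\pi_i(a'|s)}=\log\frac{\pi_0(a|s)}{\pi_0(a'|s)}+\frac1\alpha\sum_{k<i}\big(Q_k^*(s,a)-Q_k^*(s,a')\big).
\]

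\textbf{Main obstacle.} The hard step is relating $Q_k^*$ to $Q_P^{\pi_k}$ so that the summands tend to a positive constant. We know the sandwich $Q_P^{\pi_k}\le Q_k^*\le Q_P^{\pi_{k+1}}$, where the upper bound holds because the KL penalty only lowers values. Both sides converge to $Q_\infty$, hence $Q_k^*\to Q_\infty$ as well. If $Q_\infty(s,a)>Q_\infty(s,a')$, then the summands exceed some $\delta>0$ for all large $k$. The partial sums therefore diverge linearly, and $\pi_i(a|s)/\pi_i(a'|s)\to\infty$. The delicate points are justifying the upper bound via the fixed-point comparison and making sure the pointwise limits exist at every $(s,a)$. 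Finiteness of $\pi_0$'s log-ratio, guaranteed by full support, keeps the initial term bounded.
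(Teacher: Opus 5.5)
Your argument is correct under the $(s,a)$-rectangularity you impose, which the per-$(s,a)$ minimization in Theorem~1's operators already tacitly requires. It takes a different route from the paper's for monotonicity, and a more careful one for convergence.

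\textbf{Monotonicity.} The paper works directly with the trajectory objective. Since the KL term vanishes at $\pi=\mu$, $\min_p\widehat{\eta}(\pi_{i+1},p,\pi_i)\ge\min_p\widehat{\eta}(\pi_i,p,\pi_i)=J(\pi_i)$. Since the KL term is nonnegative, $\widehat{\eta}(\pi_{i+1},p,\pi_i)\le\eta(\pi_{i+1},p)$ for every $p$. That needs no Bellman structure and holds for any $P$, but it yields only the scalar inequality $J(\pi_{i+1})\ge J(\pi_i)$. Your operator-comparison chain $Q_P^{\pi_i}\le Q_i^*\le Q_P^{\pi_{i+1}}$ costs rectangularity but gives pointwise monotonicity. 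That pointwise statement is what the convergence half actually needs.

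\textbf{Convergence.} The paper infers pointwise convergence of the $Q$-functions from convergence of the scalar sequence $J(\pi_i)$, which does not follow. It also identifies the regularized fixed point $Q_i^*$ that drives the update with the unregularized robust value $Q_P^{\pi_i}$ named in the hypothesis. Your monotone bounded sequence supplies the pointwise limit $Q_\infty$ legitimately. Your sandwich then shows $Q_i^*\to Q_\infty$ as well, so the hypothesis on $\lim_i Q_P^{\pi_i}$ really does control the log-ratio increments. From there your telescoping step is the same as the paper's.

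\textbf{One small fix.} Drop the parenthetical ``(or state)''. Under $s$-rectangularity the adversary at $s$ couples all actions, so the per-$(s,a)$ operator $\mathcal{B}^\pi$ you wrote evaluates $\pi$ against the $(s,a)$-rectangular hull of $P$ rather than $P$ itself.
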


\begin{proof}
	
Let \(J_i := J(\pi_i) = \min_{p \in \mathcal{P}} \eta(\pi_i, p)\). By definition, \(\pi_{i+1}\) maximizes the worst‑case regularized return, hence
\[
\min_{p \in \mathcal{P}} \hat{\eta}(\pi_{i+1}, p, \pi_i) \;\ge\; \min_{p \in \mathcal{P}} \hat{\eta}(\pi_i, p, \pi_i).
\]
For \(\pi = \pi_i\) the KL‑regularization term vanishes, so
\[
\hat{\eta}(\pi_i, p, \pi_i) = \eta(\pi_i, p) \quad \text{for every } p,
\]
and consequently
\[
\min_{p \in \mathcal{P}} \hat{\eta}(\pi_i, p, \pi_i) = \min_{p \in \mathcal{P}} \eta(\pi_i, p) = J_i.
\]
Thus we have
\[
\min_{p \in \mathcal{P}} \hat{\eta}(\pi_{i+1}, p, \pi_i) \ge J_i. \tag{A}
\]
On the other hand, because the KL divergence is non‑negative,
\[
\hat{\eta}(\pi_{i+1}, p, \pi_i) \le \eta(\pi_{i+1}, p) \quad \text{for every } p,
\]
and taking the minimum over \(p\) preserves the inequality:
\[
\min_{p \in \mathcal{P}} \hat{\eta}(\pi_{i+1}, p, \pi_i) \le \min_{p \in \mathcal{P}} \eta(\pi_{i+1}, p) = J_{i+1}. \tag{B}
\]
Combining (A) and (B) yields \(J_i \le J_{i+1}\) for all \(i\), establishing monotonic improvement.

For each iteration \(i\), let \(Q_i^*=Q^{\pi_i}_P\) denote the action-value function obtained from the robust regularized Bellman operator with reference policy $\pi_{i}$; this is the unique fixed point of the operator \(\mathcal{T}_{\pi_i}\) defined in Theorem~1. Theorem~1 also gives the form of the optimal policy for the subproblem:
\[
\pi_{i+1}(a|s) \propto \pi_i(a|s) \exp\!\left( \frac{1}{\alpha} Q_i^*(s,a) \right). \tag{*}
\]

We know that $\{J(\pi_i)\}$ is a non-decreasing sequence bounded above (the return is bounded under the usual assumptions), therefore it converges. Consequently, the value functions $\{Q_i^*\}$ converge pointwise to some limit $\bar{Q}$. Now fix a state \(s\) and two actions \(a,a'\) such that \(\bar{Q}(s,a) > \bar{Q}(s,a')\). Set \(\delta = \bar{Q}(s,a) - \bar{Q}(s,a') > 0\) and choose \(\varepsilon = \delta/4\). Since \(Q_i^* \to \bar{Q}\) pointwise, there exists an index \(N\) such that for all \(i \ge N\),
	\[
	|Q_i^*(s,a) - \bar{Q}(s,a)| < \varepsilon, \qquad |Q_i^*(s,a') - \bar{Q}(s,a')| < \varepsilon.
	\]
	Then for every \(i \ge N\),
	\[
	Q_i^*(s,a) - Q_i^*(s,a') \;\ge\; (\bar{Q}(s,a)-\varepsilon) - (\bar{Q}(s,a')+\varepsilon) \;=\; \delta - 2\varepsilon \;=\; \frac{\delta}{2} \;>\; 0.
	\]
	Let \(c = \delta/2 > 0\), we have \(Q_i^*(s,a) - Q_i^*(s,a') \ge c\) for all \(i \ge N\).
	
	From the update rule \((*)\) we obtain a recurrence for the ratio of the policy probabilities:
	\[
	\frac{\pi_{i+1}(a|s)}{\pi_{i+1}(a'|s)}
	= \frac{\pi_i(a|s)}{\pi_i(a'|s)} \;
	\exp\!\left( \frac{Q_i^*(s,a) - Q_i^*(s,a')}{\alpha} \right).
	\]
	Applying this repeatedly for \(i = N, N+1, \dots, k-1\) yields
	\begin{equation*}
		\begin{aligned}
			\frac{\pi_k(a|s)}{\pi_k(a'|s)}
			&= \frac{\pi_N(a|s)}{\pi_N(a'|s)} \;
			\exp\!\left( \frac{1}{\alpha} \sum_{i=N}^{k-1} \bigl( Q_i^*(s,a) - Q_i^*(s,a') \bigr) \right)
			\\&\ge \frac{\pi_N(a|s)}{\pi_N(a'|s)} \;
			\exp\!\left( \frac{(k-N)c}{\alpha} \right).
		\end{aligned}
	\end{equation*}
	
	Because \(\pi_0\) has full support and the updates preserve positivity, \(\pi_N(a|s) > 0\) and \(\pi_N(a'|s) > 0\), hence the initial ratio \(\frac{\pi_N(a|s)}{\pi_N(a'|s)}\) is a finite positive number. Letting \(k \to \infty\), the exponential factor tends to infinity, and therefore
	\[
	\frac{\pi_k(a|s)}{\pi_k(a'|s)} \to \infty.
	\]
	
	This shows that in the limit the policy concentrates all probability mass on actions with the highest limiting value, which is exactly the behaviour of an optimal robust policy for the original (non‑regularized) problem. 
\end{proof}

\end{document}